\def\a{{\bf a}}
\def\b{{\bf b}}
\def\c{{\bf c}}
\def\eb{{\bf e}}
\def\f{{\bf f}}
\def\gbold{{\bf g}}
\def\h{{\bf h}}
\def\u{{\bf u}}
\def\v{{\bf v}}
\def\w{{\bf w}}
\def\x{{\bf x}}
\def\z{{\bf z}}
\def\B{{\mathcal B}}
\def\C{{\mathcal C}}
\def\F{{\mathcal F}}
\def\M{{\mathcal M}}
\def\X{{\mathcal X}}
\def\R{{\mathbb R}}
\def\al{\alpha}
\def\d{\delta}
\def\D{\Delta}
\def\e{\epsilon}
\def\g{\gamma}
\def\l{\lambda}
\def\om{\omega}
\def\OM{\Omega}
\def\r{\rho}
\def\s{\sigma}
\def\SI{\Sigma}
\def\t{\tau}
\def\th{\theta}
\def\Th{\Theta}
\def\balpha{{\boldsymbol \alpha}}
\def\bbeta{{\boldsymbol \beta}}
\def\boldeta{{\boldsymbol \eta}}
\def\bth{{\boldsymbol \theta}}
\def\bphi{{\boldsymbol \phi}}
\def\bxi{{\boldsymbol \xi}}
\def\bzeta{{\boldsymbol \zeta}}
\def\zb{\bar{z}}
\def\kai{k \ap \infty}
\def\tai{t \ap \infty}
\def\ap{\rightarrow}
\def\seq{\subseteq}
\def\bp{\{-1,1\}}
\def\bz{{\bf 0}}
\def\imp{\; \Longrightarrow \;}
\def\fa{\; \forall}
\def\as{\mbox{ a.s.}}
\def\wp{\mbox{ w.p.}}
\def\nm{\Vert}
\renewcommand{\and}{\mbox{$\wedge$}}
\def\bxt{\bxi_{t+1}}
\def\bzt{\bzeta_{t+1}}
\def\bths{\bth^*}
\def\Vd{\dot{V}}
\def\gJ{\nabla J}
\newcommand{\bc}{\begin{center}}
\newcommand{\ec}{\end{center}}
\newcommand{\be}{\begin{equation}}
\newcommand{\ee}{\end{equation}}
\newcommand{\bd}{\begin{displaymath}}
\newcommand{\ed}{\end{displaymath}}
\newcommand{\ba}{\begin{array}}
\newcommand{\ea}{\end{array}}
\newcommand{\ben}{\begin{enumerate}}
\newcommand{\een}{\end{enumerate}}
\newcommand{\bit}{\begin{itemize}}
\newcommand{\eit}{\end{itemize}}
\newcommand{\beq}{\begin{eqnarray}}
\newcommand{\eeq}{\end{eqnarray}}
\newcommand{\btab}{\begin{tabular}}
\newcommand{\etab}{\end{tabular}}
\newcommand{\bfig}{\begin{figure}}
\newcommand{\efig}{\end{figure}}
\newcommand{\btp}{\begin{tikzpicture}}
\newcommand{\etp}{\end{tikzpicture}}
\newcommand{\nmm}[1]{ \nm #1 \nm }
\newcommand{\nmeu}[1]{ \nm #1 \nm_2 }
\newcommand{\nmeusq}[1]{ \nm #1 \nm_2^2 }
\newcommand{\nmi}[1]{ \nm #1 \nm_\infty}
\newcommand{\IP}[2]{ \langle #1 , #2 \rangle }
\def\nmsl1{\nm_{{\rm SL1}}}
\definecolor{verm}{rgb}{0.6,0.2,0.2}
\definecolor{purp}{rgb}{0.3,0.1,0.6}
\definecolor{purple}{rgb}{0.4,0.0,0.6}
\definecolor{bggreen}{rgb}{0.1,0.3,0.1}
\definecolor{dgreen}{rgb}{0.1,0.6,0.1}
\definecolor{black}{rgb}{0.0,0.0,0.0}
\definecolor{crim}{rgb}{0.3,0.1,0.1}
\definecolor{dred}{rgb}{0.5,0.1,0.1}
\definecolor{Blue}{cmyk}{0.65,0.13,0,0}
\definecolor{Black}{cmyk}{0,0,0,1}
\definecolor{Red}{cmyk}{0,1,1,0}
\definecolor{Green}{cmyk}{1,0,1,0}
\definecolor{Orange}{cmyk}{0,0.61,0.87,0.1}
\definecolor{Fuchsia}{cmyk}{0.47,0.91,0,0.08}
\definecolor{PineGreen}{cmyk}{0.92,0,0.59,0.25}
\definecolor{verm}{rgb}{0.6,0.2,0.2}
\definecolor{purp}{rgb}{0.3,0.1,0.6}
\definecolor{purple}{rgb}{0.4,0.0,0.6}
\definecolor{bggreen}{rgb}{0.1,0.3,0.1}
\definecolor{dgreen}{rgb}{0.1,0.6,0.1}
\definecolor{black}{rgb}{0.0,0.0,0.0}
\definecolor{crim}{rgb}{0.3,0.1,0.1}
\definecolor{dred}{rgb}{0.5,0.1,0.1}
\newtheorem{corollary}{Corollary}{\bf}{\it}
\newtheorem{definition}{Definition}{\bf}{\it}
\newtheorem{example}{Example}{\bf}{\rm}
\newtheorem{lemma}{Lemma}{\bf}{\it}
\newtheorem{theorem}{Theorem}{\bf}{\it}
{\bf}{\it}
{\bf}{\it}
{\bf}{\rm}
\def\gJ{\nabla J}
\def\gJt{\gJ(\bth_t)}
\def\Vd{\dot{V}}
\def\bD{{\boldsymbol \D}}
\begin{document}

\title{
Convergence Rates for Stochastic Approximation: \\
Biased Noise with Unbounded Variance, and Applications
}

\author{Rajeeva L.\ Karandikar and M.\ Vidyasagar
\thanks{RLK is Professor Emeritus at the Chennai Mathematical Institute;
Email: rlk@cmi.ac.in.
MV is National Science Chair at Indian Institute of Technology Hyderabad;
Email: m.vidyasagar@iith.ac.in.
The research of MV was supported by the Science and Engineering Research
Board, India.}
}

\maketitle

\begin{abstract}
\textbf{This paper is dedicated to the memory of Boris Teodorovich Polyak.}

In this paper, we study the convergence properties of the Stochastic
Gradient Descent (SGD) method for finding a stationary point
of a given objective function $J(\cdot)$.
The objective function is not required to be convex.
Rather, our results apply to a class of ``invex'' functions, which have the
property that every stationary point is also a global minimizer.
First, it is assumed that $J(\cdot)$ satisfies a property that
is slightly weaker than the Kurdyka-Lojasiewicz (KL) condition,
denoted here as (KL').
It is shown that the iterations $J(\bth_t)$ converge almost surely
to the global minimum of $J(\cdot)$.
Next, the hypothesis on $J(\cdot)$ is strengthened from (KL') to
the Polyak-Lojasiewicz (PL) condition.
With this stronger hypothesis, we derive estimates on the rate of
convergence of $J(\bth_t)$ to its limit.
Using these results, we show that for functions satisfying the PL property,
the convergence rate of both the objective function
and the norm of the gradient with SGD is the same as the best-possible rate for convex
functions.
While some results along these lines have been published in the past,
our contributions contain two distinct improvements.
First, the assumptions on the stochastic gradient are more general
than elsewhere, and second, our convergence is almost sure, and not
in expectation.
We also study SGD when only function evaluations are permitted.
In this setting, we determine the ``optimal'' increments or the size
of the perturbations.
Using the same set of ideas, we establish the global convergence
of the Stochastic Approximation (SA) algorithm under more general
assumptions on the measurement error, compared to the existing literature.
We also derive bounds on the rate of convergence of the SA algorithm
under appropriate assumptions.

\end{abstract}

\section{Introduction}\label{sec:Intro}

Suppose $\f : \R^d \ap \R^d$ is some function, and it is desired to
find a solution $\bths$ to the equation $\f(\bths) = \bz$.
The \textbf{stochastic approximation (SA)} algorithm, introduced in
\cite{Robbins-Monro51}, addresses the situation where the only information
available is a \textit{noise-corrupted} measurement of $\f(\bth_t)$.
If $\gbold : \R^d \ap \R^d$ and it is desired to find a fixed
point of this map, then this is the same as solving $\f(\bths) = \bz$,
where $\f(\bth) := \gbold (\bth) - \bth$.
On the other hand, if $J : \R^d \ap \R$ is a $\C^1$ function and it is desired
to find a stationary point of $J(\cdot)$, then the problem is to find a
solution to $\gJ(\bths) = \bz$, using only a
\textit{stochastic gradient}.

Suppose the problem is one of finding a solution to $\f(\bths) = \bz$.
At step $t$, the avaiable measurement is of
the form $\f(\bth_t) + \bxi_{t+1}$, where $\bxi_{t+1}$ is the error term.
The canonical step in the SA algorithm is to update $\bth_t$ 
to $\bth_{t+1}$ via
\be\label{eq:111}
\bth_{t+1} = \bth_t + \al_t [ \f(\bth_t) + \bxt ] ,
\ee
where $\al_t \in (0,\infty)$ is called the ``step size.''
If one wishes to find a fixed point of the map $\gbold(\cdot)$,
then by defining $\f(\bth) = \gbold(\bth) - \bth$,
one can apply the iteration \eqref{eq:111}, which now takes the form
\be\label{eq:112}
\bth_{t+1} = (1 - \al_t) \bth_t + \al_t [ \gbold(\bth_t) + \bxt ] .
\ee
It is clear that \eqref{eq:112} need not be analyzed separately from
\eqref{eq:111}.
Next, if it is desired to find a stationary point of a $\C^1$-map
$J(\cdot)$, then the iteration becomes
\be\label{eq:112a}
\bth_{t+1} = \bth_t - \al_t \h_{t+1}  ,
\ee
where $\h_{t+1}$ is a noisy approximation to $\gJ(\bth_t)$,
known as the ``stochastic gradient.''
Since the update of $\bth_t$ is in the negative direction of $\h_{t+1}$,
\eqref{eq:112a} is called the \textbf{Stochastic
Gradient Descent (SGD)} method.
Note that in this paper we study only optimization problems
where the variable $\bth$ is \textit{unconstrained}.
If $\bth$ is restricted to belong to some closed convex subset
$S \seq \R^d$, then $\h_{t+1}$ would be an approximate \textit{projection}
of the gradient.
Methods such as mirror descent incorporate such a projection.
However, since we study only unconstrained problems, one can think of
$\h_{t+1}$ as an approrimate gradient.

In this paper, we establish the convergence of the SA algorithm of
\eqref{eq:111} and the SGD algorithm of \eqref{eq:112a}
under more general (i.e., less restrictive) assumptions than at present.
We establish not just convergence, but also bounds on the \textit{rates}
of convergence.
Specifically for SGD, we establish \textit{almost sure} convergence,
as opposed to convergence in expectation as in much of the literature.
Since any stochastic algorithm results in a single sample path of
a stochastic process, it is very useful to know that almost all sample
paths converge to the desired limit.
We establish almost sure convergence for SA as well; however, that is
the usual practice in that literature.

The paper is organized as follows: 
In Section \ref{sec:Contrib}, we briefly summarize the contributions
of the present paper, above and beyond the known results.
In Section \ref{sec:Classical}, we review the early classical results
in Stochastic Approximation and Stochastic Gradient Descent, roughly before
the year 2000.
In Section \ref{sec:Recent}, we review results in more recent times,
with emphasis on the variety of definitions of a ``stochastic gradient,''
and various properties that the objective function is assumed to satisfy.
In Section \ref{sec:New}, we present two general theorems on the
convergence of stochastic processes.
While these theorems form the basis for the proofs in later sections,
they might be of independent interest.
In Section \ref{sec:SGD}, we apply the convergence theorems of
Section \ref{sec:New} to establish the convergence of the SGD algorithm,
and also to obtain bounds on the rate of convergence.
In Section \ref{sec:SA}, these same
theorems are applied to study the SA algorithm, and results analogous
to those in Section \ref{sec:SGD} are proved.
Section \ref{sec:Conc} suggests a few problems for future research.

\section{Contributions of the Paper}\label{sec:Contrib}

The focus of the paper is on establishing the almost sure convergence
of the Stochastic Gradient Descent (SGD) algorithm 
under the most general conditions thus far.
Specifically, we have achieved the following:
\ben
\item The class of functions for which the convergence of SGD
is established includes not only convex functions, but also some
nonconvex functions.
All the functions studied here are ``invex'' in the sense that every
stationary point is also a global minimum.
(But there are invex functions that are not covered by our approach.)
When the objective function $J(\cdot)$ satisfies an analog of the
Kurdyka-Lojasiewicz property (our condition is slightly weaker),
we can prove the almost sure convergence of SGD.
If $J(\cdot)$ satisfies the stronger Polyak-Lojasiewicz property,
we not only establish the convergence of SGD, but also bounds on the
\textit{rate} of convergence.
\item
Previously, estimates were, for the most part, available only for
convergence in expectation.
We are able to estimate the rate of \textit{almost sure} convergence as well.
For this, we build on the contents of 
\cite{Sebbouh-Gower-Defazio-PMLR21,Liu-Yuan-arxiv22}.
However, unlike in those papers, our estimates do not require a specific
choice of step size sequences, but are quite general.
This point is elaborated further in Section \ref{sec:New}.
See the Remark after Theorem \ref{thm:52}.
\item 
The assumptions on the stochastic gradient are the most general thus far.
Specifically,
define the quantities $\x_t$ and $\bzt$ as in \eqref{eq:1221a} below.
Roughly speaking, $\x_t$ is the ``bias'' of the stochastic gradient
$\h_{t+1}$, that is, the difference between the conditional expectation
of $\h_{t+1}$ and the true gradient $\gJt$; and $\bzt$ is the
``unpredictable part'' of $\h_{t+1}$.
Then our assumptions are
\bd
\nmeu{\x_t} \leq \mu_t [ 1 + \nmeu{\gJt} ] , \fa \bth_t \in \R^d , \fa t ,
\ed
\bd
CV_t(\h_{t+1}) = E_t ( \nmeusq{\bzt} ) \leq M_t^2 [ 1 + J(\bth_t) ] ,
\fa \bth_t \in \R^d , \fa t .
\ed
While bounds on $CV_t(\h_{t+1})$ similar to the above are found in,
for example, \cite[Section 4]{Bottou-et-al-SIAM18},
the presence of the term $\nmeu{\gJt}$ in the first equation is new.
As shown in Section \ref{sec:SGD}, our results apply (for example)
to coordinate gradient descent, when the component to be updated is
\textit{not} selected according to a uniform distribution across components.
Further, our assumption on the conditional variance of the stochastic
gradient is weaker than the so-called ``expected smoothness'' condition
from \cite{Khaled-Rich-arxiv20}, which is proposed as the ``weakest 
assumption.''
We show that if $\nabla J(\cdot)$ is globally Lipschitz-continuous,
which is a standard assumption, then there is no need for the expected
smoothness assumption.
\item It is shown that when $J(\cdot)$ satisfies the
Polyak-Lojasiewicz property, the rate of converges matches
the ``best possible'' rate for SGD established in \cite{Arjevani-et-al-MP23}
for convex functions, in terms of both the
objective function and the norm of the gradient.
This result is significant because it is also shown in 
\cite{Arjevani-et-al-MP23} that, for arbitrary nonconvex functions, the 
achievable convergence rate of GSD is much slower.
Thus, effectively, we are able to extend the rates proved in
\cite{Arjevani-et-al-MP23} to a class of nonconvex functions.
\item Next we study stochastic gradients that use function evaluations alone.
We establish the ``optimal'' choice of increments for achieving the fastest
convergence.
Using this optimal choice, it is shown that by using \textit{three}
function evaluations per iteration, it is possible to match the convergence
rate in \cite{Nesterov-FCM17}, though this paper is restricted to convex
objective functions and noise-free measurements.
\item Finally, the same methods used to establish the convergence of SGD
are also used to establish the convergence of Stochastic Approximation.
Specifically, under very general assumptions similar to those in SGD,
we build upon the results in \cite{MV-MCSS23}. 
We not only
prove almost sure convergence, but also bound the rates of convergence.
Because of the similarity of the proofs, these theorems are stated
and the proofs are just sketched.
\een

\section{Classical Results in SA and SGD}\label{sec:Classical}

In order to place our contributions to SA in context,
we begin with the classical results.
For the benefit of the reader, we state all results using the notation
of the present paper.
Recent results are described in Section \ref{sec:Recent}.

Different analyses of the SA and SGD algorithms depend on different
assumptions on the error $\bxt$ in \eqref{eq:111},
and the nature of the stochastic gradient $\h_{t+1}$ in \eqref{eq:112a}.
In order to describe the classical and recent results concisely,
we introduce some notation.
Throughout the paper, all random variables and stochastic processes
are defined on some underlying probability space $(\OM,\SI,P)$
where $\OM$ is the sample space, $\SI$ is a $\s$-algebra denoting the
event space, and $P$ is a probability measure defined on $\SI$.
Further, throughout the paper,
all random variables are assumed to be square-integrable, so that
various conditional expectations and variances are well-defined.
Let $\bth_0^t$ denote $(\bth_0 , \cdots , \bth_t)$, and similarly
let $\bxi_1^t$ denote $\bxi_1 , \cdots , \bxi_t$, 
let $\h_1^t$ denote $(\h_1 , \cdots , \h_t)$.
Note that there is no $\bxi_0$ nor an $\h_0$.
The initial guess $\bth_0$ in SA can be either deterministic or random.
Let $\F_t$ denote the $\s$-algebra generated by $\bth_0 , \bxi_1^t$
in the case of \eqref{eq:111} or \eqref{eq:112}, and the $\s$-algebra
generated by  $\bth_0 , \h_t^t$ in the case of \eqref{eq:112a}.
Let $\M(\F_t)$ denote the set of functions that are measurable
with respect to $\F_t$.
Then it is clear that $\bth_t \in \M(\F_t)$ for all $t \geq 1$.
For an $\R^d$-valued
 random variable $X$, let $E_t(X)$ denote the \textbf{conditional
expectation} $E(X | \F_t)$, and let $CV_t(X)$ denote its 
\textbf{conditional variance} defined by\footnote{See 
\cite{Williams91,Durrett19} for relevant background on stochastic processes.}
\be\label{eq:112b}
CV_t(X) = E_t( \nmeusq{ X - E_t(X)} ) = E_t(\nmeusq{X}) - \nmeusq{E_t(X)} .
\ee

The SA algorithm was introduced in \cite{Robbins-Monro51}
for the scalar case where $d = 1$.
However, we state it for the multidimensional case, and in our notaton.
In this formulation,
the error $\bxt$ is assumed to satisfy the following assumptions:
\be\label{eq:112c}
E_t ( \bxt ) = \bz , \quad CV_t( \bxt ) \leq M^2 , \as ,
\ee
for some finite constant $M$.\footnote{
Note that, since the paper deals with random variables
and stochastic processes, \textit{almost all statements} hold
``almost surely.''
To avoid tedious repetition, we omit this phrase in what follows.}
The first assumption implies that $\{ \bxt \}$ is a martingale
difference sequence, and also that
$\f(\bth_t) + \bxt$ is an \textit{unbiased} measurement of $\f(\bth_t)$. 
The second assumption
means that the conditional variance of the error is globally bounded,
both as a function of $\bth_t$ and as a function of $t$.
With the assumptions in \eqref{eq:112d} below, along with some assumptions
on the function $\f(\cdot)$, it is shown in \cite{Robbins-Monro51}
that $\bth_t$ converges to a solution of $\f(\bths) = \bz$,
provided the step size sequence $\{ \al_t \}$
satisfies the \textbf{Robbins-Monro (RM)} conditions
\be\label{eq:112d}
\sum_{t=0}^\infty \al_t^2 < \infty , \quad
\sum_{t=0}^\infty \al_t = \infty .
\ee

The first SGD method was introduced in \cite{Kief-Wolf-AOMS52},
for finding a stationary point of a $\C^1$
function $J: \R \ap \R$, that is, a solution to
$\gJ(\bth) = \bz$.\footnote{
Strictly speaking, we should use $J'(\th)$ for the scalar case.
But we use vector notation to facilitate comparison with later formulas.}
using an \textit{approximate gradient} of $J(\cdot)$.
The specific formulation used in \cite{Kief-Wolf-AOMS52} is
\be\label{eq:112e}
h_{t+1} := \frac{ J(\th_t + c_t \D + \xi_{t+1}^+) -
J(\th_t - c_t \D + \xi_{t+1}^-)}{2 c_t} 
\approx \gJ(\th_t) .
\ee
where $\D$ is a small and fixed real number,
$c_t >0$ is called the \textbf{increment},
and $\xi_{t+1}^+$, $\xi_{t+1}^-$ are the measurement errors.
This terminology ``increment'' is not standard but is used here.
In order to make the expression a better and better approximation to
the true $\gJ(\bth_t)$, the increment $c_t$ must approach zero as $\tai$.
This approach was extended to the multidimensional case in \cite{Blum54}.
There are several ways to extend \eqref{eq:112e} to the multivariate case,
and these are discussed in Section \ref{sec:Recent}.
Let $h_{t+1}$ denote the (scalar) stochastic gradient defined in
\eqref{eq:112e}, and define
\bd
z_t = E_t ( h_{t+1} ) , x_t = z_t - \gJ(\th_t) , \zeta_{t+1} = h_{t+1} - z_t .
\ed
Then it is shown in \cite{Kief-Wolf-AOMS52} that the error term satisfies
\be\label{eq:112f}
| E_t ( \zeta_{t+1}) | \leq K c_t , \quad CV_t(\zeta_{t+1}) \leq M^2/c_t^2 ,
\ee
for suitable constants $K, M$.
In other words, neither of the two assumptions in \eqref{eq:112c} is satisfied:
The estimate of $\gJ(\bth_t)$ is biased, and the variance is unbounded
as a function of $t$, though it is bounded as a function of $\bth_t$
for each fixed $t$.
In this case, for the scalar case, it was shown in \cite{Kief-Wolf-AOMS52}
that $\bth_t$ converges to a stationary point of $J(\cdot)$
if the Kiefer-Wolfwitz-Blum (KWB) conditions
\be\label{eq:112g}
c_t \ap 0 ,  \quad 
\sum_{t=0}^\infty ( \al_t^2 / c_t^2 ) < \infty , \quad 
\sum_{t=0}^\infty \al_t c_t < \infty , \quad 
\sum_{t=0}^\infty \al_t = \infty 
\ee
are satisfied.
In \cite{Blum54} it is shown that the same conditions also ensure
convergence when $d > 1$.
Note that the conditions automatically imply the finiteness of
the sum of $\al_t^2$.

One of the first papers to expand the scope of SA is
\cite{Kushner-JMAA77}.
In that paper, the author considers a recursion of the form
\bd
\bth_{t+1} = \bth_t - \al_t \gJ(\bth_t) + \al_t \bxt + \al_t \bbeta_{t+1} ,
\ed
where $\bbeta_t \ap \bz$ as $\tai$.
Here, the sequence $\{ \bxt \}$ is \textit{not}
assumed to be a martingale difference sequence.
Rather, it is assumed to satisfy a different set of conditions,
referred to as the Kushner-Clark conditions;
see \cite[A5]{Kushner-JMAA77}.
It is then shown that if the error sequence $\{ \bxt \}$ satisfies
\eqref{eq:112c}, i.e., is a martingale difference sequence,
then Assumption (A5) holds.
Essentially the same formulation is studied in \cite{Ljung78}.
The same formulation is also studied
\cite[Section 2.2]{Borkar22}, where \eqref{eq:112c} holds,
and $\bbeta_t \ap \bz$ as $\tai$.
In \cite{Tadic-Doucet-AAP17}, it is assumed only that
$\limsup_t \bbeta_t < \infty$.

In all cases,
it is shown that $\bth_t$ converges to a solution of $\f(\bths) = \bz$,
\textit{provided} the iterations remain bounded almost surely.
However, this is a very strong assumption, in our view.
The assumption that $\beta_t \ap 0$ as $\tai$ may not, by itself,
be sufficient to ensure that
the iterations are bounded, as shown by the next simple example.
Consider the \textit{deterministic scalar} recursion
\bd
\th_{t+1} = (1 + \al_t) \th_t + \al_t \beta_t ,
\ed
where $\{ \beta_t \}$ is a sequence of constants.
The closed-form solution to the above recursion is
\bd
\th_t = \prod_{\t=0}^{t-1} (1+ \al_\t) \th_0
+ \sum_{k=0}^{t-1} \left[ \prod_{s=k}^{t-1} (1 + \al_s) \right] \al_k \beta_k.
\ed
Now let $\th_0 = 0$ and suppose $\beta_t \geq 0$ for all $t$.
Then it follows that
\bd
\th_t = \sum_{k=0}^{t-1} \left[ \prod_{s=k}^{t-1} (1 + \al_s) \right]
\al_k \beta_k
\geq  \sum_{k=0}^{t-1} \al_k \beta_k .
\ed
Thus, even when the step size sequence $\{ \al_t \}$ satisfies the
standard Robbins-Monro conditions,
it is possible to choose the sequence $\{ \beta_t \}$ in such a manner
that $\beta_t \ap 0$ as $\tai$, and yet
\bd
\sum_{t=0}^\infty \al_t \beta_t = \infty .
\ed
Thus, merely requiring that $\beta_t \ap 0$ as $\tai$ is not sufficient
to ensure the boundedness of the iterations.
This discussion shows that there is a need for an approach in which
the boundedness of the iterations can be inferred separately from
the convergence.

%

In most of the references mentioned thus far,
the convergence of the SA algorithm is proved using the so-called ODE method.
This approach is based on the idea that, as $\al_t \ap 0$, the
sample paths of the stochastic process ``converge'' to the
\textit{deterministic} solutions of the associated ODE
$\dot{\bth} = \f(\bth)$.
This approach is introduced in \cite{Kushner-JMAA77,Ljung-TAC77b,Der-Fradkov74}.
Book-length treatments of this approach can be found in
\cite{Kushner-Clark12,Kushner-Yin03,BMP90,Borkar22}.
See also \cite{Meti-Priou84} for an excellent summary.
In principle, the ODE method can cope with the situation where
the equation $\f(\bths) = \bz$ has multiple solutions.
The typical theorem in this approach states that \textit{if}
the iterations $\{ \bth_t \}$ remain bounded, then $\bth_t$
approaches the solution set of the equation under study.
In \cite{Borkar-Meyn00}, for the first time, the boundedness
of the iterations is a \textit{conclusion, not a hypothesis}.
The arguments in that paper, and its successors, are based on defining
a \textbf{mean flow equation}
\bd
\dot{\bth} = \f_\infty(\bth) , \quad 
\f_\infty(\bth) := \lim_{r \ap \infty} \frac{\f(r \bth)}{r} .
\ed
It is assumed that $\f(\cdot)$ is globally Lipschitz-continuous and
that $\bz$ is a globally asymptotically stable equilibrium of
$\f_\infty(\cdot)$.
This implies that the equation under study has a unique solution,
in effect negating one of the potential advantages of the ODE method.
Also, it is easy to see that if $\f(\cdot)$ grows \textit{sublinearly}, i.e.,
\bd
\lim_{\nmm{\bth} \ap \infty} \frac{\nmm{\f(\bth)}}{\nmm{\bth}} = 0 ,
\ed
then $\f_\infty (\bth) \equiv \bz$, so that the hypothesis of
\cite{Borkar-Meyn00} can never be satisfied.
In addition, when $\f(\cdot)$ is discontinuous, the limiting equation
is not an ODE, but a differential inclusion; see for example
\cite{Borkar-Shah-arxiv23}; this requires more subtle analysis.

In contrast, the analysis in this paper is based on the so-called
``supermartingale approach,'' pioneered in \cite{Gladyshev65,Robb-Sieg71}.
In contrast with the ODE approach, the supermartingale approach can cope with 
functions $\f(\cdot)$ that grow sublinearly and/or are discontinuous,
with no modifications.
In this approach, it is also very easy to obtain bounds on the
\textit{rate of convergence} of the algorithm.
The presumed advantage of the ODE method is that it can cope with 
the case where there are multiple solutions; this comes at the expense
of \textit{assuming} rather than \textit{inferring} that the iterations
remain bounded.
In the supermartingale approach, not only is it easy to infer the
boundedness of the iterations, but boundedness can be inferred separately
from convergence.
Finally, the analysis remains virtually unchanged when the step sizes
$\al_t$ are themselves \textit{random.}
Random step sizes are natural when ``block'' updating is used in
\eqref{eq:111} or \eqref{eq:112a}; see Section \ref{ssec:32} for a mention
of block updating.

\section{Review of SGD for Nonconvex Optimization}\label{sec:Recent}

In this paper we aim to study the minimization of a class of \textit{nonconvex}
$\C^1$ objective functions, which have the property that every stationary
point is also a global minimum.
While every smooth ($\C^1$) convex function has this property, so do
some nonconvex functions, for example ``invex'' functions (see below).
In this section, we briefly survey the recent literature in the area of 
the Stochastic Gradient Method (SGD) applied to nonconvex optimization.
Given that the literature is vast even within these limits, we refer the
reader to the survey paper \cite{Bottou-et-al-SIAM18} which contains
both a thorough discussion as well as a wealth of references, and discuss
only some additional papers that are either not mentioned in this paper,
or are not elaborated sufficiently therein.

To give some structure, we divide the discussion into the following topics:
\bit
\item Preliminaries
\item Classes of functions
\item Types of stochastic gradients
\item Nature of convergence
\eit

\subsection{Preliminaries}\label{ssec:30}

We begin with two ``standing'' assumptions, which are standard
in the literature.
These assumptions are assumed to hold in the remainder of the paper.
Note that $J(\cdot)$ denotes the objective function.
\ben
\item[(S1)] $J(\cdot)$ is $\C^1$, and $\gJ(\cdot)$ is globally 
Lipschitz-continuous with constant $L$.
\item[(S2)] $J(\cdot)$ is bounded below.
Thus
\bd
J^* := \inf_{\bth \in \R^d} J(\bth) > - \infty .
\ed
Note that it is \textit{not} assumed that the infimum is actually attained.
\een
Next we present a useful consequence of Assumptions (S1)
and (S2).\footnote{We are
grateful to Reviewer No.\ 2 for suggesting this result and its proof.}
\begin{lemma}\label{lemma:31}
Suppose (S1) holds, and that 
\bd
J^* := \inf_{\bth \in \R^d} J(\bth) > - \infty .
\ed
Then
\be\label{eq:122a}
\nmeusq{ \nabla J(\bth) } \leq 2L [ J(\bth) - J^* ].
\ee
\end{lemma}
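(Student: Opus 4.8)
The plan is to exploit the descent-lemma consequence of global Lipschitz continuity of the gradient, and then minimize the resulting upper bound over a cleverly chosen direction and step. The key inequality I would start from is the standard quadratic upper bound implied by (S1): for any $\bth, \bth' \in \R^d$,
\be
J(\bth') \leq J(\bth) + \IP{\nabla J(\bth)}{\bth' - \bth} + \frac{L}{2} \nmeusq{\bth' - \bth}. \notag
\ee
This is the fundamental tool linking a Lipschitz gradient to a quadratic over-estimate of the function, and it holds for every pair of points precisely because $\nabla J$ is $L$-Lipschitz.

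**Main step.** Fix an arbitrary $\bth$ and apply the bound with the specific choice $\bth' = \bth - \frac{1}{L} \nabla J(\bth)$, i.e.\ take one gradient step of size $1/L$. Substituting gives
\be
J(\bth') \leq J(\bth) - \frac{1}{L} \nmeusq{\nabla J(\bth)} + \frac{L}{2} \cdot \frac{1}{L^2} \nmeusq{\nabla J(\bth)} = J(\bth) - \frac{1}{2L} \nmeusq{\nabla J(\bth)}. \notag
\ee
Now I invoke (S2): since $J^* = \inf_{\bth} J(\bth)$ is finite, we have $J(\bth') \geq J^*$, so $J^* \leq J(\bth) - \frac{1}{2L} \nmeusq{\nabla J(\bth)}$. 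Rearranging yields exactly $\nmeusq{\nabla J(\bth)} \leq 2L[J(\bth) - J^*]$, which is \eqref{eq:122a}.

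**Where the subtlety lies.** The only genuine step is choosing the right $\bth'$; the rest is algebra. One might worry about whether $J^*$ is attained, but this is precisely why the statement uses $\inf$ and why (S2) only asserts boundedness below, not attainment: the argument above never needs a minimizer, only the inequality $J(\bth') \geq J^*$ for the particular point $\bth'$ we construct. I would also note that the descent lemma itself should be justified from (S1) — it follows by writing $J(\bth') - J(\bth) = \int_0^1 \IP{\nabla J(\bth + s(\bth'-\bth))}{\bth'-\bth}\, ds$ and bounding the integrand using the Lipschitz property $\nmeu{\nabla J(\bth + s(\bth'-\bth)) - \nabla J(\bth)} \leq L s \nmeu{\bth'-\bth}$. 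The main obstacle, such as it is, is simply recognizing that the optimal step size $1/L$ maximizes the guaranteed decrease $\frac{1}{L}\nmeusq{\nabla J} - \frac{L}{2}\alpha^2 \nmeusq{\nabla J}$ over $\alpha$; any other choice gives a weaker constant, so this pins down the factor $2L$ as the sharp one for this method of proof.
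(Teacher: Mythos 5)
Your proof is correct and follows essentially the same route as the paper: both apply the descent lemma (quadratic upper bound from the $L$-Lipschitz gradient, cited in the paper from Bertsekas--Tsitsiklis) at the point $\bth' = \bth - (1/L)\nabla J(\bth)$ and then use $J(\bth') \geq J^*$ to conclude. Your additional remarks on deriving the descent lemma by integration and on the optimality of the step size $1/L$ are fine but not needed beyond what the paper does.
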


\begin{proof}
By applying \cite[Eq.\ (2.4)]{Ber-Tsi-SIAM00} to $J(\bth)$, it follows that,
for every $\bphi , \bth \in \R^d$, we have
\bd
J^* \leq J(\bphi) \leq J(\bth) + \IP{ \nabla J(\bth) }{ \bphi - \bth}
+ \frac{L}{2} \nmeusq{ \bphi - \bth} .
\ed
Now choose $\bphi = \bth - (1/L) \nabla J (\bth)$.
This leads to
\bd
J^* \leq J(\bth) - \frac{1}{L} \nmeusq{ \nabla J(\bth) }
+ \frac{1}{2L} \nmeusq{ \nabla J(\bth) }
= J(\bth) - \frac{1}{2L} \nmeusq{ \nabla J(\bth) } .
\ed
This is the same as \eqref{eq:122}.
This completes the proof.
\end{proof}

\textbf{Remark:} By replacing $J(\bth)$ by $J(\bth) - J^*$,
it can be assumed that
the global infimum of $J(\cdot)$ equals zero, and we do so hereafter. 
In this case, \eqref{eq:122a} can be replaced by
\be\label{eq:122}
\nmeusq{ \nabla J(\bth) } \leq 2L J(\bth) .
\ee

When the function $J(\cdot)$ attains its minimum, the set
\be\label{eq:121}
S_J := \{ \bth : J(\bth) = J^* \}
\ee
is nonempty.
In this case, we define, as usual, the distance
\be\label{eq:121a}
\r(\bth) := \inf_{\bphi \in S_J} \nmeu{\bth - \bphi} .
\ee

In the context of function minimization, a possibly random sequence
of iterations $\{ \bth_t \}$ is generated.
Then we can pose three questions:
\ben
\item[(Q1)] Does $J(\bth_t) \ap 0$ as $\tai$?\footnote{Recall the
assumption that $J^* = 0$.}
\item[(Q2)] Does $\nmeu{\gJ(\bth_t)} \ap 0$ as $\tai$?
\item[(Q3)] Does $\r(\bth_t) \ap 0$ as $\tai$?
\een

In order to address the three questions above, we introduce some
assumptions on $J(\cdot)$.
Some of these assumptions make use of the following concepts.
The first concept
is introduced in \cite{Gladyshev65} but without giving it a name.
The formal definition is given in \cite[Definition 1]{MV-MCSS23}:

\begin{definition}\label{def:Class-B}
A function $\eta : \R_+ \ap \R_+$ is
said to \textbf{belong to Class $\B$} if $\eta(0) = 0$, and in addition,
for arbitrary real numbers $0 < \e \leq M$,
it is true that
\bd
\inf_{\e \leq r \leq M} \eta(r) > 0 .
\ed
\end{definition}

Note $\eta(\cdot)$ is \textit{not} assumed to be monotonic, or even to be
continuous.
However, if $\eta : \R_+ \ap \R_+$ is continuous, then
$\eta(\cdot)$ belongs to Class $\B$ if and only if (i) $\eta(0) = 0$,
and (ii) $\eta(r) > 0$ for all $r > 0$.
Such a function is called a ``class $P$ function'' in 
\cite{Gruene-Kellett14}.
Thus a Class $\B$ function is slightly more general than a function
of Class $P$.

An example of a function of Class $\B$ is given next:
\begin{example}\label{exam:1}
Define a function $\phi: \R_+ \ap \R_+$ by
\bd
\phi(\th) = \left\{ \ba{ll} \th, & \mbox{if } \th \in [0,1] , \\
e^{-(\th-1)}, & \mbox{if } \th > 1 . \ea \right.
\ed
Then $\phi$ belongs to Class $\B$.
A sketch of the function $\phi(\cdot)$ is given in Figure \ref{fig:1}.
Note that, if we were to change the definition to:
\bd
\phi(\th) = \left\{ \ba{ll} \th, & \mbox{if } \th \in [0,1] , \\
2 e^{-(\th-1)}, & \mbox{if } \th > 1 , \ea \right.
\ed
then $\phi(\cdot)$ would be discontinuous at $\th = 1$, but it would
still belong to Class $\B$.
Thus a function need not be continuous to belong to Class $\B$.

\bfig
\bc
\includegraphics[width=60mm]{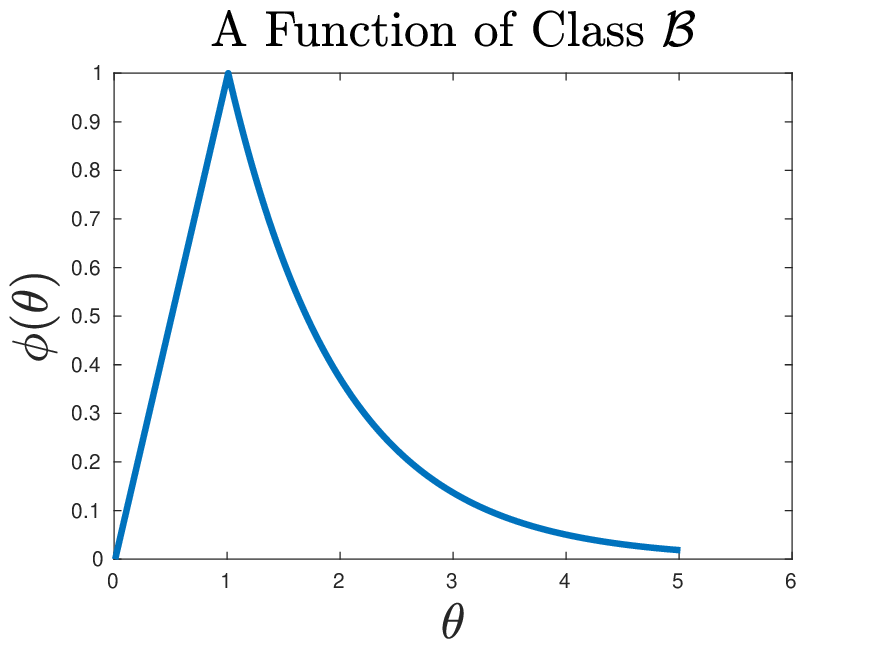}
\ec
\caption{An illustration of a function in Class $\B$}
\label{fig:1}
\efig
\end{example}

\subsection{Classes of Functions}\label{ssec:31}

In this subsection we introduce various classes of functions that are
used in this paper.
Note that different theorems make use of different classes of functions,
which in turn lead to different conclusions.
\ben
\item[(PL)] There exists a constant $K$ such that
\be\label{eq:123}
\nmeusq{\gJ(\bth)} \geq K J(\bth) , \fa \bth \in \R^d .
\ee
\item[(KL')] There exists a function $\psi(\cdot)$ of Class $\B$
such that
\be\label{eq:124}
\nmeu{\gJ(\bth)} \geq \psi(J(\bth)) , \fa \bth \in \R^d .
\ee
\item[(NSC)]
This property consists of the following assumptions, taken together.
\ben
\item The function $J(\cdot)$ attains its infimum.
Therefore the set $S_J$ defined in \eqref{eq:121} is nonempty.
\item
The function $J(\cdot)$ has compact level sets.
For every constant $c \in (0,\infty)$, the level set
\bd
L_J(c) := \{ \bth \in \R^d : J(\bth) \leq c \}
\ed
is compact.
\item
There exists a number $r > 0$ and a continuous function
$\eta : [0,r] \ap \R_+$ such that $\eta(0) = 0$, and
\be\label{eq:125}
\r(\bth) \leq \eta(J(\bth)) , \fa \bth \in L_J(r) .
\ee
\een
It is obvious that, if (NSC) is satisfied, then $J(\bth_t) \ap 0$ as $\tai$
implies that $\r(\bth_t) \ap 0$ as $\tai$.
\een
Next we discuss the significance of these assumptions, as well as the
nomenclature.
As a quick summary, (KL') allows us to conclude that $J(\bth_t)$
and $\gJt$ converge to zero as $\tai$.
These are questions (Q1) and (Q2) above.
Property (PL) allows goes beyond (KL') and not allows us to deduce
convergence, but also bound the \textit{rate} at which convergence takes place.
Finally (NSC) allows us to answer (Q3) above, namely the convergence of
$\bth_t$ to the set of minima $S_J$.

PL stands for the Polyak-Lojasiewicz condition.
In \cite{Polyak-UCMMP63}, Polyak introduced \eqref{eq:123},
and showed that it is sufficient to ensure that iterations converge at
a ``linear'' (or geometric) rate to a global minimum, whether or not
$J(\cdot)$ is convex.
Note that \eqref{eq:123} can also be rewritten as
\bd
\nmeu{\gJ(\bth)} \geq K^{1/2} [J(\bth)]^{1/2} , \fa \bth \in \R^d .
\ed
Suppose $J(\cdot)$ is $R$-strongly convex in the sense of
\cite[Definition 2.1.3]{Nesterov04}, that is, there exists a constant
$R > 0$ such that
\bd
J(\bphi) \geq J(\bth) + \IP{\nabla J(\bth)}{\bphi - \bth}
+ \frac{R}{2} \nmeusq{\bphi - \bth} .
\ed
In this case, $J(\cdot)$ has a unique global minimizer, call it $\bths$.
Again, let us assume that $J^* = J(\bths) = 0$.
Then we can apply \cite[Eq.\ (2.1.24)]{Nesterov04} with $f = J$,
$x = \bths$, $y = \bth$, and $\mu = R$, which gives
\bd
J(\bth) \leq \frac{1}{2R} \nmeusq{\nabla J(\bth)} .
\ed
Thus an $R$-strongly convex function satisfies (PL) with $K = 2R$.
On the other hand, the class (PL) is strictly larger than not just
the class of strongly convex functions, and also contains \textit{some}
nonconvex functions.
For example,
\bd
J(\th) = \th^2 + \sin^2 \th
\ed
is not convex, but satisfies the (PL) property with $K = 1$.
To summarize, any conclusion about (PL) functions would also apply to
strongly convex functions, but not vice versa.

In \cite{Loja63}, Lojasiewicz introduced a more general condition
\be\label{eq:126}
\nmeu{J(\bth)} \geq C [ J(\bth)]^r , \fa \bth \in \R^d ,
\ee
for some constant $C$ and some exponent $r \in [1/2,1)$.
He also showed that \eqref{eq:126} holds for real algebraic varieties
in a neighborhood of critical points.
Note that in the present paper, we use only the Polyak condition
\eqref{eq:123}.

In \cite{Kurdyka98}, Kurdyka proposed a more general inequality than
\eqref{eq:126}, 
namely: There exist a constant $c > 0$ and a function
$v: [0,c) \ap \R$ which is $\C^1$ on $(0,c)$, such that $v'(x) > 0$
for all $x \in (0,c)$, and
\be\label{eq:127}
\nmeu{ \nabla (v \circ J)(\bth) } \geq 1 , \fa \bth \in J^{-1}(0,c) ,
\ee
where (only on this occasion)
$\circ$ denotes the composition of two functions.
By applying the chain rule, one can rewrite \eqref{eq:127} as
\be\label{eq:128}
\nmeu{\gJ(\bth)} \geq [ v'(J(\bth) ]^{-1} .
\ee
In particular, if $v(x) = x^{1-r}$ for some $r \in (0,1)$, then
\eqref{eq:128} becomes \eqref{eq:126} with $C = 1/(1-r)$.
For this reason, \eqref{eq:128} is sometimes referred to as the
Kurdyka-Lojasiewicz (KL) inequality.
See for example \cite{BDLM-TAMS10}.
In our case, we don't require the right side to be a differentiable
function; rather we require only that it be a function of Class $\B$
of $J(\bth)$.
Hence we choose to call this condition as (KL'), to suggest that it is
similar to, but weaker than, the KL condition.

\begin{example}\label{exam:31}
Consider an even function $J: \R \ap \R$ defined by
\bd
J*\th) = \left\{ \ba{ll}
\th^2 + 4 \; \sin^2 \th , & 0 \leq \th \leq 5 , \\
J(5) + 0.5 \; J'(5) \; (1 - \exp(-2(\th-5))) , & \th > 5 , \\
J(-\th) , & \th < 0 .
\ea \right.
\ed
A plot of $J(\th)$ and of $[J'(\th)]^2/J(\th)$ are shown in
Figure \ref{fig:KL}.
From this it can be seen (and it is also readily verified) that,
though the ratio $[J'(\th)]^2/J(\th) \ap 0$ as $\th \ap \infty$,
the ratio is never actually zero.
Thus $[J'(\th)]^2/J(\th)$ is a function of Class $\B$.
As a result, this function satisfies the property (KL').

\bfig
\bc
\includegraphics[height=60mm]{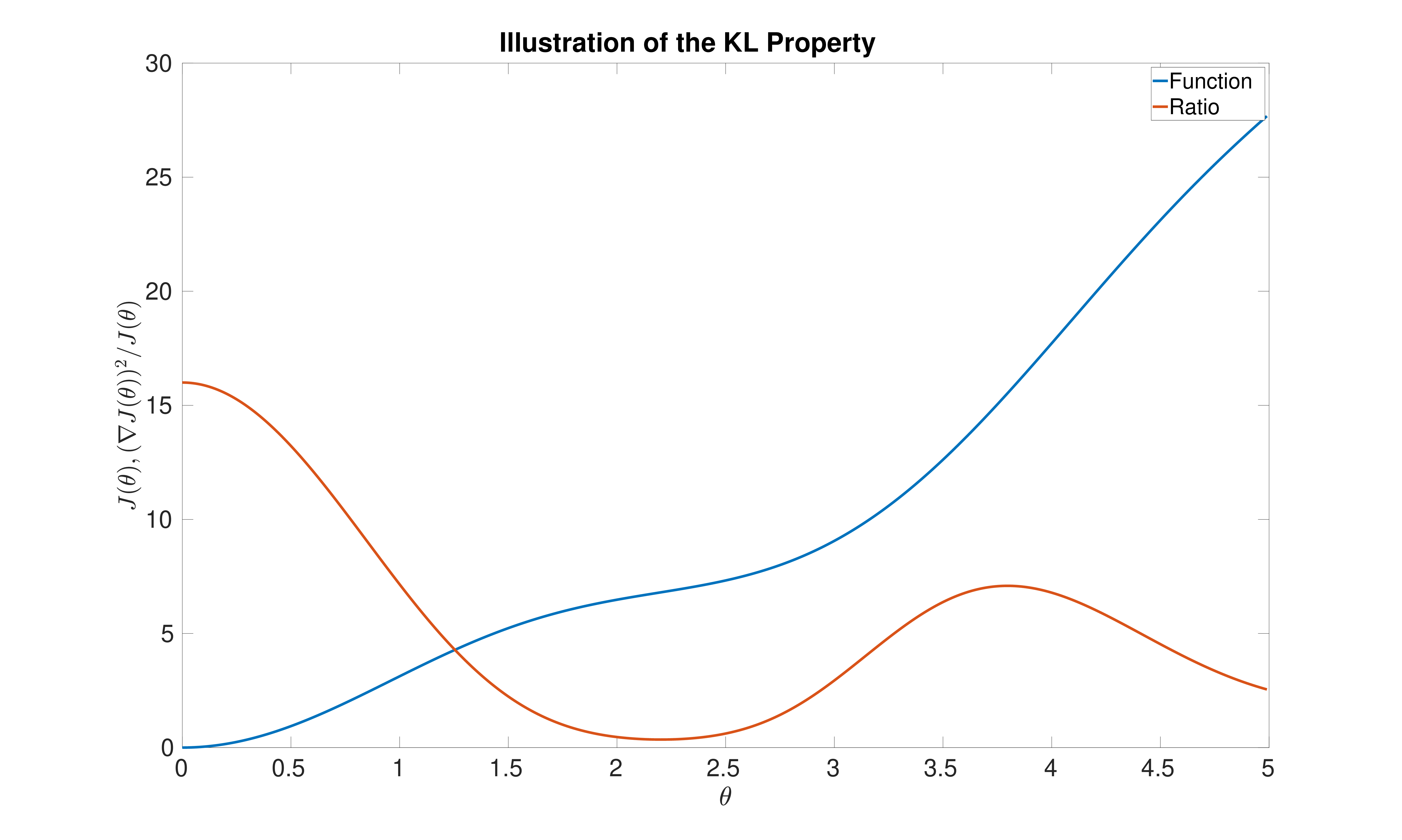}
\ec
\caption{An example of a function of Class (KL')}
\label{fig:KL}
\efig
\end{example}

Finally we come to (NSC), which stands for ``Nearly Strongly Convex.''
If $J(\cdot)$ is $R$-strongly convex with a unique global minimizer
$\bths$, then
\bd
J(\bth) \geq \frac{R}{2} \nmeusq{\bth - \bths} , \fa \bth \in \R^d .
\ed
In this case $S_J$ is the singleton set $\{ \bths \}$,
$\r(\bth) = \nmeu{\bth - \bths}$ (or just $\nmeu{\bth}$ if $\bths = \bz$).
Hence (NSC) holds with $\r(r) = (2r/R)^{1/2}$.
However, in the present paper, we don't assume that there is a unique global
minimizer, nor strong convexity; we just assume (NSC).

Note that, under (PL) or (KL'), $\gJ(\bth) = \bz$ implies that
$J(\bth) = 0$, i.e., that every stationary point is also a global minimum.
Thus any function that satisfies either (PL) or (KL') is ``invex''
as defined in \cite{Hanson-JMAA81}.
See \cite{Karimi-et-al16} for an excellent survey of these topics.

While most of the literature on optimization addresses \textit{convex}
optimization, there are some papers where the convergence of the stochastic
gradient descent method is analyzed under the Kurdyka-Lojasiewicz
condition of \eqref{eq:128}.
See for example \cite{Fontaine-et-al-CoLT21,Fatkhullin-et-al-Neurips22}.
As shown below, in this paper we strengthen the results in the above papers
by proving almost sure convergence, and also replacing (KL) by the weaker
(KL') condition.

\subsection{Types of Stochastic Gradients}\label{ssec:32}

In this subsection, we list some approaches to choosing a stochastic
gradient $\h_{t+1}$ in \eqref{eq:112a}.
It is noteworthy that the phrase ``stochastic gradient'' is used with
two different meanings in the literature.
Both of them are discussed here.

We begin with the following specific type of optimization problem:
Suppose $\X$ is some set, and $\pi$ is some probability measure on $\X$.
Suppose further that $f : \X \times \R^d \ap \R$ is a $\C^1$
function, and define the objective function
\be\label{eq:1211}
J(\bth) := E_{x \sim \pi} [ f(x,\bth) ] = \int_\X f(x,\bth) \; \pi(dx) .
\ee
For the moment, we ignore technicalities about the well-definedness of
the integral.

A typical application would be neural network training.
Suppose $\x \in \R^n$ is the input to the network, $y \in \R$
the desired output with input $\x$ (the label), and $\bth$ is
the set of ``weights'' or adjustable parameters in the network.
A neural network ``architecture'' defines  family of maps
$H(\cdot,\bth) : \R^n \ap \R$ for each $\bth \in \R^d$.
Finally, there is a ``loss function'' $L : \R \times \R \ap \R_+$;
quite often $L(y,z) = |y-z|^2$.
The training data consists of labelled pairs $\{ (\x_i , y_i) \}_{i=1}^m$.
To choose the weight vector optimally, one minimizes
\bd
J(\bth) := \frac{1}{m} \sum_{i=1}^m L(y_i,H(\x_i,\bth)) .
\ed
To put this problem within the framework of \eqref{eq:1211},
define $\X$ to be the finite set $\{ (\x_1,y_1) , \cdots , (\x_m,y_m) \}$,
and choose $\pi$ to be the uniform distribution on $\X$.

Next we discuss three approaches to approximating $\gJ(\bth)$
when $J(\cdot)$ is as in \eqref{eq:1211}.
Further details can be found in \cite[Section 3.3]{Bottou-et-al-SIAM18}.
Under mild conditions, it can be shown that
\be\label{eq:1212}
\gJ(\bth) = E_{x \sim \pi} [ \nabla_{\bth} f(x,\bth) ] .
\ee
In other words, it is permissible to interchange differentiation and
integration in \eqref{eq:1212}.
If $\X$ is a finite set, then this is automatic.

\noindent \textbf{Stochastic Gradient:}
At step $t$, generate an element $x_{t+1} \in \X$ with marginal
distribution $\pi$.
To permit adaptive sampling, it is \textit{not assumed} that $x_{t+1}$ is
independent of the preceding samples $( x_1 , \cdots , x_t)$.
Then the search direction $\h_{t+1}$ is set equal to
\be\label{eq:1213}
\h_{t+1} = \nabla_{\bth}f(x_{t+1},\bth_t) .
\ee

\noindent \textbf{Batch Update:} 
In this case,
\bd
\h_{t+1} = \gJ(\bth_t)
\ed
as computed in \eqref{eq:1212}.
Note that the computation is quite straight-forward if $\X$ is a finite set.

\noindent \textbf{Minibatch Update:}
This approach is intermediate between the above two approaches.
At step $t$, an integer $N_t$ (possibly random) is chosen, and $N_t$
samples $x_j , j \in [N_t]$ are chosen from $\X$.
The analysis is simplest if these samples are drawn independently with
distribution $\pi$, after replacement.
Then
\be\label{eq:1214}
\h_{t+1} = \frac{1}{N_t} \sum_{j=1}^{N_t} \nabla_{\bth} f(x_j,\bth_t) .
\ee
If there are repeated samples, then the corresponding terms are
summed more than once in the above equation.

Until now, we have focused on objective functions of the form \eqref{eq:1211},
and ways to approximate its gradient by random sampling.
Next we discuss approximation methods that apply to general $\C^1$
objective functions.
There are two parts to this:
(i) Constructing approximations to the true gradient, and (ii) selecting
which components of the current guess $\bth_t$ are to be updated at step $t$.
We discuss these two topics in the opposite order.
That is, we begin by discussing
some popular methods of choosing coordinates to be updated,
assuming that the true gradient, corrupted by additive noise, is available.
It will be obvious that the same selection strategies can also be applied
to any stochastic gradient as well.

The first of these methods is referred to as ``Coordinate Gradient Descent''
as in \cite{Wright15} and elsewhere, but also sometimes as
``stochastic gradient,'' thus possibly leading to confusion with
\eqref{eq:1213}.

\noindent \textbf{Coordinate Gradient Descent:}
Suppose that, at step $t$, the current guess is $\bth_t$, and suppose
that the learner has access to a
(possibly noise-corrupted) measurement $\gJ(\bth) + \bxt$.
An index $i \in [d]$ is chosen at random with a uniform probability,
and the search direction is defined as
\be\label{eq:1215}
\h_{t+1} = d \eb_i \circ [ \gJ(\bth_t) + \bxt ] ,
\ee
where $\eb_i$ denotes the $i$-elementary unit vector, and $\circ$ denotes
the Hadamard or componentwise product.\footnote{ If $\a, \b \in \R^d$, then
$\c = \a \circ \b$ belongs to $\R^d$ and is defined via $c_i = a_i b_i$
for all $i$.}
Even if $\bxt \equiv \bz$, $\h_{t+1}$ is still random due to the choice of $i$.
The factor of $d$ is to ensure that the conditional expectation
with respect to $\bth_t$ of $\h_{t+1}$
equals the true gradient $\gJ(\bth_t)$ plus the expectation of $\bxt$.
If this $\h_{t+1}$ is substituted into \eqref{eq:112a}, it is obvious that
only the $i$-th component of $\bth_t$ is updated at time $t$,
and all other components remain the same.

An excellent survey of coordinate gradient descent for convex objective
functions is found in \cite{Wright15}, and some results for
nonconvex objective functions are found in \cite{Bach-et-al-aisats19}.
It is worth pointing out that, in these references and many others, the
error term $\bxt$ is assumed to be zero.
Thus the only source of randomness is the coordinate to be updated.
Much of the detailed analysis carried out in these papers would not be
possible in the presence of measurement errors.

One can also apply this philosophy of updating only
one (possibly randomly chosen)
coordinate at a time to stochastic approximation as in \eqref{eq:111}.
This leads to the update formula
\be\label{eq:1216}
\bth_{t+1}  = \bth_t + \al_t \eb_i \circ [ \f(\bth_t) + \bxt ] .
\ee
In such a case, it is common to refer to this approach as
\textbf{Asynchronous SA} or ASA.
This terminology was apparently introduced in \cite{Tsi-ML94}.
The approach is studied further in \cite{Borkar98}.
In particular, a distinction between using a ``global clock''
and a ``local clock'' for componentwise updating is introduced.

\noindent \textbf{Block Coordinate Gradient Descent:}
A variant of the above is to carry out ``block'' updating.
At each time, a possibly random subset $S_t \seq [d]$ is selected.
Define
\bd
\eb_{S_t} := \sum_{i \in S_t} \eb_i .
\ed
Then the vector $\h_{t+1}$ is defined as
\be\label{eq:1217}
\h_{t+1} := \frac{d}{|S_t|} \eb_{S_t} \circ [ \gJ(\bth_t) + \bxt ] .
\ee               
This implies that, at time $t$, only the components of $\bth_t$,
$i \in S_t$ are updated, and the rest are unchanged.
As above, block updating can also be incorporated in the SA algorithm 
of \eqref{eq:111}, as follows:
\be\label{eq:1218}
\bth_{t+1}  = \bth_t + \balpha_t \circ \eb_{S_t} \circ [ \f(\bth_t) + \bxt ] ,
\ee
where $\balpha_t$ is now a \textit{vector} of step sizes.
Thus, while only those components $i \in S_t$ are updated, different
updated components could have different step sizes.
This topic is not discussed further in this paper.
Instead the reader is referred to \cite{MV-RLK-BASA-arxiv21,MV-RLK-BASA-COT24}
for the latest results.

\noindent \textbf{Gradients Using Only Function Evaluations:}
Next we discuss some approaches to generating approximate gradients that
make use of only function evaluations.
As pointed out above, the first such approach is in \cite{Kief-Wolf-AOMS52},
which is shown above as \eqref{eq:112e}.
It is for the case $d = 1$, and requires two function
evaluations per iteration.
Subsequently Blum \cite{Blum54} presented an approach for the case $d > 1$,
which requires $d+1$ evaluations per iteration.
When $d$ is large, this approach is clearly impractical.
A significant improvement came in\cite{Spall-TAC92}, in which a method
called ``simultaneous perturbation stochastic approximation'' (SPSA)
was introduced, which requires only \textit{two} function evaluations,
irrespective of the dimension $d$.
However, the proof of convergence of SPSA given in \cite{Spall-TAC92} 
requires many assumptions.
These are simplified in \cite{Chen-Dunc-Dunc-TAC97}.
An ``optimal'' version of SPSA is introduced in \cite{Sadegh-Spall-TAC98},
and is described below.

For each index $t+1$, suppose $\D_{t+1,i}, i \in [d]$ are $d$ different and
pairwise independent
\textbf{Rademacher variables}.\footnote{Recall that Rademacher
random variables assume values in $\bp$ and are independent of each other.}
Moreover, suppose that $\D_{t+1,i} , i \in [d]$ are all independent
(not just conditionally independent) of the $\s$-algebra $\F_t$ for each $t$.
Let $\bD_{t+1} \in \bp^d$ denote the vector of Rademacher variables
at time $t+1$.
Then the search direction $\h_{t+1}$ in \eqref{eq:112a} is defined 
componentwise, via
\be\label{eq:1221}
h_{t+1,i} = \frac{[ J(\bth_t + c_t \bD_{t+1}) + \xi_{t+1,i}^+ ]
- [ J(\bth_t - c_t \bD_{t+1}) - \xi_{t+1,i}^- ] } {2 c_t \D_{t+1, i}} ,
\ee
where  $\xi_{t+1,1}^+ , \cdots , \xi_{t+1,d}^+$,
$\xi_{t+1,1}^- , \cdots , \xi_{t+1,d}^-$ represent the measurement errors.
A similar idea is used in \cite{Nesterov-FCM17},
except that the bipolar vector $\bD_{t+1}$ is replaced by a random
Gaussian vector $\boldeta_{t+1}$ in $\R^d$.
As can be seen from the literature, one of the key steps in analyzing
SPSA is to find tail probability estimates of the quantity
$\nmeu{\boldeta_{t+1}}/| \eta_{t+1,i} |$.
If $\boldeta_{t+1}$ is Gaussian, then this ratio can be arbitrarily large,
albeit with small probability.
However, with Rademacher perturbations, the ratio $\nmeu{\bD_{t+1}}/
|\D_{t+1,i}|$ always equals $\sqrt{d}$.
This observation considerably simplifies the analysis.
An excellent survey of this topic can be found in \cite{Li-Xia-Xu-arxiv22},
which discusses other approaches not mentioned here.

The original SPSA envisages only two measurements per iteration, and the
resulting estimate of $\gJ(\bth_t)$ has bias $O(c_t)$ and conditional
variance $O(1/c_t^2)$.
However, it is possible to take more measurements and reduce the bias
of the estimate, while retaining the same bound on the conditional variance.
Specifically, if $k+1$ measurements are taken, then the bias is
$O(c_t^k)$ (which converges to zero more quickly), while the conditional
variance remains as $O(1/c_t^2)$.
See \cite{Pach-Bhat-Pras-arxiv22} and the references therein.

We conclude this subsection by briefly discussing 
\cite{Granichin-ARC02}.
In this paper, a very general framework is proposed that is capable of
handling not only additive measurement errors (as has been the case above),
but also \textit{multiplicative} errors, and others.
Three (closely related) algorithms are proposed in this paper,
out of which only the second one is detailed here, in the interests of brevity.

The set-up is as follows: Suppose $f: \R^n \times \R^d \ap \R$ 
is a $\C^1$ function, and $\pi$ is a (possibly unknown)
probability measure on $\R^p$.
The objective function is as in \eqref{eq:1211}, namely
\bd
J(\bth) = \int_{\R^n} f(\w,\bth) \; \pi(d\w) = E_{\w \sim \pi}[f(\w,\bth)] .
\ed
There is also a probability distribution $P$ on $\R^d$, chosen by the learner,
whose role is to generate an i.i.d.\ sequence of perturbations
$\{ \bD_t \}_{t \geq 1}$.
In addition, there two i.i.d.\ sequences $\{ \w_t^+ \}_{t \geq 0}$,
and $\{ \w_t^- \}_{t \geq 0}$, with distribution $\pi$.
To update the current guess $\bth_t$, one undertakes the following steps.
As with the other derivative-free methods, there are two sequences:
$\{ \al_t \}$ of step sizes, and $\{ c_t \}$ of increments.
At time $t$, the perturbation vector $\bD_{t+1}$ is known, so one can define
\bd
\x_{t+1}^+ = \bth_t + c_t \bD_{t+1} , \quad
\x_{t+1}^- = \bth_t - c_t \bD_{t+1} .
\ed
The measurements available to the learner at time $t$ consist of the pair
\bd
y_{t+1}^+ = f(\w_t^+,\x_{t+1}^+) + \xi_{t+1}^+ , \;\;
y_{t+1}^- = f(\w_t^-,\x_{t+1}^-) + \xi_{t+1}^- ,
\ed
where $\xi_{t+1}^+ , \xi_{t+1}^-$ are measurement errors.
The last step is to define the stochastic gradient $\h_{t+1}$.
This is stated in terms of a sequence of ``kernel functions''
$K_t : \R^d \ap \R^d$ that satisfy, for each $t$
\bd
\int_{\R^d} K_t(\z) \; P(d\z) = \bz , \quad
\int_{\R^d} K_t(\z) \z^\top \; P(d\z) = I_d , \quad
\int_{\R^d} \nmeusq{ K_t(\z) } \; P(d\z) < \infty . 
\ed
With this notation, the stochastic gradient $\h_{t+1}$
is defined as
\bd
\h_{t+1} = \frac{y_{t+1}^+ - y_{t+1}^-} { 2 c_t } K_t(\bD_{t+1}) ,
\ed
with the update rule as in \eqref{eq:112a}, namely
\bd
\bth_{t+1} = \bth_t - \al_t \h_{t+1}  ,
\ed
Note that the choice
\bd
K_t(\z) = (1/z_1 , \cdots , 1/z_d)
\ed
gives the standard Kiefer-Wolfowitz-Blum approach.
However, it is clear that the present scheme offers considerably more
flexibility.

In order to analyze the behavior of the algorithm, it is assumed in
\cite{Granichin-ARC02} that
\ben
\item $J(\cdot)$ is a strongly convex function of $\bth$, and
\item There is a constant $L$ such that $\nabla_{\bth} f(\w,\bth)$
is $L$-Lipschitz continuous for each $\w \in \R^n$.
\een
In particular, Item 1 means that $J(\cdot)$ has a unique global minimizer
$\bths$.
Under these assumptions, \cite[Theorem 1]{Granichin-ARC02} gives
sufficient conditions for $\bth_t$ to converge to $\bths$ in the
mean-squared sense, and almost surely.
The reader is directed to \cite{Granichin-ARC02} for more details.

\subsection{Types and Bounds on the Rates of Convergence}\label{ssec:33}

Since each of these quantities $J(\bth_t), \nmeu{\gJ(\bth_t)}$
and $\r(\bth_t)$ is a random number in general, it is
necessary to specify the nature of the convergence to the desired limit of $0$.
For the most part, the stochastic optimization literature is focused
on convergence in expectation (which in turn implies convergence in
probability).
However, in this paper, we prove the stronger property of almost sure
convergence.
Since any stochastic algorithm generates a single sample path of some
stochastic process, it is advantageous to know that almost all sample paths
converge to the desired limit.
The proofs given here make use of the
Robbins-Siegmund theorem \cite{Robb-Sieg71} and some extensions proved here.
It is worth pointing out that most of the arguments in the survey paper
\cite{Bottou-et-al-SIAM18}, which are used to establish convergence in
expectation, can be readily modified to establish almost sure convergence.

Suppose $\h_{t+1}$ is the stochastic gradient in \eqref{eq:112a}.
For future reference, define
\be\label{eq:1221a}
\z_t = E_t(\h_{t+1}) , \quad
\x_t = \z_t - \gJ(\bth_t) ,\quad
\bzt = \h_{t+1} - \z_t .
\ee
Now we discuss some universal \textit{lower bounds} on the rate of
convergence of the stochastic gradient method, taken from 
\cite{Arjevani-et-al-MP23}, but stated in the notation of the present paper.
The authors study an objective function $J : \R^d \ap \R$ with
a globally Lipschitz-continuous gradient \cite[Eq.\ (3)]{Arjevani-et-al-MP23}.
Further, it is assumed that $\z_t = \gJ(\bth_t)$, and that there is a finite
constant $M$ such that $CV_t(\h_{t+1}) \leq M^2$;
see \cite[Eq.\ (2)]{Arjevani-et-al-MP23}.
Under these assumptions, it is shown that, in the case where $J(\cdot)$
is convex, achieving $\nmeu{\gJ(\bth_t)} \leq \e$ requires
$\OM(\e^{-2})$ iterations in the worst case.
For a nonconvex function, the bound goes up to $\OM(\e^{-4})$.\footnote{There
are some additional technical assumptions which are not repeated here.}
Therefore, if we wish to find a $T$ such that
\bd
\nmeu{\gJ(\bth_t)} \leq \e , \fa t \geq T ,
\ed
then $T = \OM(\e^{-2})$ for convex functions, and $T = \OM(\e^{-4})$ for
nonconvex functions.
We can turn this around to get a bound on the best achievable rate of
convergence.
If $T = \OM(\e^{-k})$, then $\e = O(T^{-1/k})$.
Hence $\nmeu{\gJ(\bth_t)} = O(t^{-1/2})$ if $J(\cdot)$ is convex,
and $\nmeu{\gJ(\bth_t)} = O(t^{-1/4})$ if $J(\cdot)$ is a general
nonconvex function.
One of the contributions of the present paper is to show that the
rate of convergence$\nmeu{\gJ(\bth_t)} = O(t^{-1/2})$ is achieved even
when $J(\cdot)$ is nonconvex, provided that Assumption (PL) is satisfied.


\section{Two New Convergence Theorems for Stochastic Processes}\label{sec:New}

In this section, we prove two new convergence theorems for Stochastic Processes.
The first of these is a slight generalization of the classic theorem of
Robbins-Siegmund \cite{Robb-Sieg71}, while the second gives a recipe
for estimating the rate of convergence of the SA algorithm in
the Robbins-Siegmund setting.


Throughout, all random variables are defined on some underlying
probability space $( \OM, \SI , P )$, and all stochastic
processes are defined on the infinite Cartesian product of this space.

The theorems proved here make use of the following classic
``almost supermartingale theorem'' of Robbins-Siegmund
\cite[Theorem 1]{Robb-Sieg71}.
The result is also proved as \cite[Lemma 2, Section 5.2]{BMP90}.
Also see a recent survey paper \cite[Lemma 4.1]{Fran-Gram22}.
The theorem states the following:

\begin{lemma}\label{lemma:2}
Suppose $\{ z_t \} , \{ f_t \} , \{ g_t \} , \{ h_t \}$ are
stochastic processes taking values in $[0,\infty)$, adapted to some
filtration $\{ \F_t \}$, satisfying
\be\label{eq:511}
E_t( z_{t+1} ) \leq (1 + f_t) z_t + g_t - h_t \as, \fa t ,
\ee
where, as before, $E_t(z_{t+1})$ is a shorthand for $E(z_{t+1} | \F_t )$.
Then, on the set
\bd
\OM_0 := \{ \om \in \OM : \sum_{t=0}^\infty f_t(\om) < \infty \}
\cap \{ \om : \sum_{t=0}^\infty g_t(\om) < \infty \} ,
\ed
we have that $\lim_{\tai} z_t$ exists, and in addition,
$\sum_{t=0}^\infty h_t(\om) < \infty$.
In particular, if $P(\OM_0) = 1$, then $\{ z_t \}$ is bounded
almost surely, and $\sum_{t=0}^\infty h_t(\om) < \infty$ almost surely.
\end{lemma}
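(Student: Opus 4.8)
The plan is to prove this via the classical supermartingale convergence machinery. The key observation is that the recursion $E_t(z_{t+1}) \leq (1+f_t) z_t + g_t - h_t$ is \emph{almost} a supermartingale inequality, but is spoiled by the multiplicative factor $(1+f_t)$ and the additive term $g_t$. The standard trick is to divide out the accumulated product of the $(1+f_\tau)$ factors and subtract off the accumulated $g_\tau$ contributions, thereby transforming $\{z_t\}$ into a genuine supermartingale (restricted to $\OM_0$). Throughout, the work happens pathwise on the set $\OM_0$ where both $\sum_\tau f_\tau$ and $\sum_\tau g_\tau$ converge.

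\medskip

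\textbf{Construction of the auxiliary process.} First I would define the discount factors $\pi_t := \prod_{\tau=0}^{t-1}(1+f_\tau)^{-1}$, which are $\F_t$-measurable and positive. Since $\sum_\tau f_\tau < \infty$ on $\OM_0$, the infinite product $\prod_\tau (1+f_\tau)$ converges to a finite positive limit on $\OM_0$, so the $\pi_t$ stay bounded away from $0$ there. Multiplying the hypothesis \eqref{eq:511} through by $\pi_{t+1}$ and rearranging gives
\bd
E_t(\pi_{t+1} z_{t+1}) \leq \pi_t z_t + \pi_{t+1} g_t - \pi_{t+1} h_t .
\ed
Next I would absorb the $g$-term by setting
\bd
w_t := \pi_t z_t + \sum_{\tau=t}^{\infty} \pi_{\tau+1} g_\tau ,
\ed
where the tail sum is finite on $\OM_0$ because $\pi_{\tau+1}$ is bounded and $\sum_\tau g_\tau < \infty$. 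A short computation shows that $\{w_t\}$ satisfies $E_t(w_{t+1}) \leq w_t - \pi_{t+1} h_t \leq w_t$ on $\OM_0$, so $\{w_t\}$ is a nonnegative supermartingale there (with the understanding that we localize to $\OM_0$, which is itself $\F_\infty$-measurable).

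\medskip

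\textbf{Applying supermartingale convergence.} Since $\{w_t\}$ is a nonnegative supermartingale restricted to $\OM_0$, the martingale convergence theorem guarantees that $\lim_{\tai} w_t$ exists and is finite almost surely on $\OM_0$. Because the tail sum $\sum_{\tau=t}^\infty \pi_{\tau+1} g_\tau \to 0$ as $\tai$, it follows that $\lim_{\tai} \pi_t z_t$ exists, and since $\pi_t$ converges to a finite positive limit on $\OM_0$, the limit $\lim_{\tai} z_t$ exists and is finite. To obtain $\sum_\tau h_\tau < \infty$, I would take expectations (or use the supermartingale inequality telescoped): summing $E_t(w_{t+1}) \leq w_t - \pi_{t+1} h_t$ and using that $w_t$ converges yields $\sum_\tau \pi_{\tau+1} h_\tau < \infty$ on $\OM_0$; since $\pi_{\tau+1}$ is bounded below by a positive constant on $\OM_0$, this forces $\sum_\tau h_\tau < \infty$ there as well.

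\medskip

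\textbf{Main obstacle.} The delicate point is the measure-theoretic localization to the random set $\OM_0$: the supermartingale property of $\{w_t\}$ holds only on $\OM_0$, not globally, so one cannot blindly invoke the convergence theorem on all of $\OM$. The clean way to handle this is to multiply everything by the indicator $\oneb_{\OM_0}$ and verify that the resulting stopped/localized process is a genuine supermartingale on the whole space, or equivalently to argue pathwise after conditioning. The other technical care-point is justifying that the tail sum $\sum_{\tau \geq t} \pi_{\tau+1} g_\tau$ is well-defined and $\F_t$-measurable with the right integrability, which requires the boundedness of $\pi_\tau$ on $\OM_0$ and a truncation/monotone-convergence argument to move expectations inside the sum. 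Once the localization is set up correctly, the remaining steps are routine manipulations of nonnegative supermartingales.
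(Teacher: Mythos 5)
First, a point of reference: the paper itself does not prove this lemma.
It is quoted as the classical Robbins--Siegmund almost-supermartingale theorem,
with the proof deferred to \cite{Robb-Sieg71}, \cite{BMP90} and \cite{Fran-Gram22}.
So your attempt must be measured against the standard proof in those references,
whose strategy --- discounting by $\pi_t = \prod_{\tau=0}^{t-1}(1+f_\tau)^{-1}$ so as to
turn \eqref{eq:511} into a supermartingale inequality --- is exactly the one you adopt.
The execution, however, has a genuine gap: your auxiliary process
$w_t := \pi_t z_t + \sum_{\tau=t}^{\infty}\pi_{\tau+1}g_\tau$ is \emph{not adapted} to
$\{\F_t\}$, because the tail sum involves $g_\tau$ and $\pi_{\tau+1}$ for $\tau > t$.
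Consequently $\{w_t\}$ is not a supermartingale, and the identity your computation needs,
$E_t\bigl(\sum_{\tau=t+1}^\infty \pi_{\tau+1}g_\tau\bigr) = \sum_{\tau=t+1}^\infty \pi_{\tau+1}g_\tau$,
is false in general; no truncation or monotone-convergence argument can repair this,
since the obstruction is measurability, not integrability.
(Replacing the tail sum by its conditional expectation $E_t(\cdot)$ restores adaptedness
via the tower property, but that quantity can be $+\infty$ even at points of $\OM_0$,
precisely because the sum may diverge off $\OM_0$ and $\OM_0$ is not $\F_t$-measurable;
the problem simply reappears.)
The classical construction avoids this by subtracting the \emph{past} rather than adding
the \emph{future}: $u_t := \pi_t z_t - \sum_{\tau=0}^{t-1}\pi_{\tau+1}g_\tau
+ \sum_{\tau=0}^{t-1}\pi_{\tau+1}h_\tau$ is adapted and satisfies $E_t(u_{t+1}) \leq u_t$;
the price is that $u_t$ is no longer nonnegative.

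The second gap is the localization you yourself flag as the ``main obstacle'': neither of
your proposed remedies works. Multiplying by $\oneb_{\OM_0}$ fails because $\OM_0$ is a tail
event, measurable with respect to $\F_\infty$ but not $\F_t$, so $\oneb_{\OM_0}$ does not pull
out of $E_t(\cdot)$ and the ``localized'' process is neither adapted nor a supermartingale;
``arguing pathwise'' is likewise unavailable, since supermartingale convergence is not a
pathwise statement. The missing ingredient is stopping times: for each integer $a$, set
$\sigma_a := \inf\{t : \sum_{\tau=0}^{t}f_\tau > a \mbox{ or } \sum_{\tau=0}^{t}g_\tau > a\}$;
these are stopping times because the \emph{partial} sums are adapted. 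Since $\pi_\tau \leq 1$,
the stopped process $u_{t \wedge \sigma_a}$ is a supermartingale bounded below by $-a$, so
Doob's convergence theorem applies to it; on $\{\sigma_a = \infty\}$ this gives convergence of
$\pi_t z_t$ and finiteness of $\sum_\tau \pi_{\tau+1}h_\tau$, and since
$\OM_0 = \bigcup_{a \in \Nbb}\{\sigma_a = \infty\}$, the conclusion holds on all of $\OM_0$.
Your final step --- using that $\pi_t$ stays bounded away from zero on $\OM_0$ to convert these
statements into $\lim_{\tai} z_t$ existing and $\sum_\tau h_\tau < \infty$ --- is correct as
stated. So the skeleton of your argument is the right one, but without the stopping-time
localization (and with a non-adapted auxiliary process) the proof does not go through.
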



Now we present our first convergence theorem, which is
an extension of Lemma \ref{lemma:2}.
Though the proof is straight-forward, we will see that it is
a useful tool to establish the convergence of stochastic gradient
methods for nonconvex functions.

\begin{theorem}\label{thm:51}
Suppose $\{ z_t \} , \{ f_t \} , \{ g_t \} , \{ h_t \}, \{ \al_t \}$ are
$[0,\infty)$-valued stochastic processes
defined on some probability space $(\OM,\SI,P)$, and
adapted to some filtration $\{ \F_t \}$.
Suppose further that
\be\label{eq:512}
E_t(z_{t+1} ) \leq (1 + f_t) z_t + g_t - \al_t h_t \as, \fa t .
\ee
Define
\be\label{eq:513}
\OM_0 := \{ \om \in \OM : \sum_{t=0}^\infty f_t(\om) < \infty \mbox{ and }
\sum_{t=0}^\infty g_t(\om) < \infty \} ,
\ee
\be\label{eq:513a}
\OM_1 := \{ \om \in \OM : \sum_{t=0}^\infty \al_t(\om) = \infty \} .
\ee
Then
\ben
\item
Suppose that $P(\OM_0) = 1$.
Then the sequence $\{ z_t \}$ is bounded almost surely, and
there exists a random variable $W$ defined on $(\OM,\SI,P)$ such that
$z_t(\om) \ap W(\om)$ almost surely.
\item
Suppose that, in addition to $P(\OM_0) = 1$, it is also true that
$P(\OM_1) = 1$.
Then
\be\label{eq:514}
\liminf_{\tai} h_t(\om) = 0  \fa \om \in \OM_0 \cap \OM_1 .
\ee
Further, suppose there exists a function $\eta(\cdot)$ of Class $\B$
such that $h_t(\om) \geq \eta(z_t(\om))$ for all $\om \in \OM_0$.
Then $z_t(\om) \ap 0$ as $\tai$ for all $\om \in \OM_0$.
\een
\end{theorem}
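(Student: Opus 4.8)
The plan is to derive Part 1 as an immediate corollary of Lemma \ref{lemma:2}, and then to exploit the summability conclusion that lemma provides in order to dispatch Part 2.

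For Part 1, the key observation is that \eqref{eq:512} already has the exact shape of \eqref{eq:511} once the last term is regrouped. Since $\{\al_t\}$ and $\{h_t\}$ are both $[0,\infty)$-valued and adapted to $\{\F_t\}$, the product $\tilde h_t := \al_t h_t$ is again a $[0,\infty)$-valued adapted process, and \eqref{eq:512} reads $E_t(z_{t+1}) \leq (1+f_t)z_t + g_t - \tilde h_t$. Applying Lemma \ref{lemma:2} with $\tilde h_t$ playing the role of $h_t$, and invoking $P(\OM_0)=1$, immediately yields a random variable $W$ with $z_t \ap W$ almost surely, the almost sure boundedness of $\{z_t\}$, and the crucial additional conclusion $\sum_{t=0}^\infty \al_t h_t < \infty$ almost surely. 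This summability is what powers the remainder of the proof.

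For the first assertion of Part 2, I would argue pointwise on $\OM_0 \cap \OM_1$ by contradiction. Fix such an $\om$; the summability above gives $\sum_t \al_t(\om) h_t(\om) < \infty$. If $\liminf_{\tai} h_t(\om) > 0$, then there are $\e > 0$ and $T$ with $h_t(\om) \geq \e$ for all $t \geq T$, so $\sum_{t \geq T} \al_t(\om) h_t(\om) \geq \e \sum_{t \geq T} \al_t(\om) = \infty$, since $\om \in \OM_1$ forces $\sum_t \al_t(\om) = \infty$. This contradiction establishes $\liminf_{\tai} h_t(\om) = 0$, and since $P(\OM_0)=P(\OM_1)=1$ this holds on a set of probability one.

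For the final assertion I would combine the convergence $z_t(\om) \ap W(\om)$ from Part 1 with the Class $\B$ lower bound, working on $\OM_0 \cap \OM_1$ (which has full probability under the stated hypotheses). Fix such an $\om$ and suppose for contradiction that $W(\om) > 0$. Because $z_t(\om) \ap W(\om)$, there is a $T$ beyond which $z_t(\om)$ lies in the interval $[\e, M]$ with $\e := W(\om)/2 > 0$ and $M := W(\om)+1$. The defining property of a Class $\B$ function then gives $c := \inf_{\e \leq r \leq M} \eta(r) > 0$, whence $h_t(\om) \geq \eta(z_t(\om)) \geq c > 0$ for all $t \geq T$, contradicting $\liminf_{\tai} h_t(\om) = 0$. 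Hence $W(\om)=0$, i.e.\ $z_t(\om) \ap 0$. The main subtlety lives precisely here: since $\eta$ need not be continuous, one cannot simply pass to the limit inside $\eta$, and it is exactly the uniform positivity of a Class $\B$ function over compact intervals bounded away from the origin (Definition \ref{def:Class-B}) that closes the gap.
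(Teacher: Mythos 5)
Your proposal is correct and follows essentially the same route as the paper's own proof: both apply Lemma \ref{lemma:2} with $\al_t h_t$ playing the role of the nonnegative decrement, both combine the resulting summability $\sum_t \al_t h_t < \infty$ with $\sum_t \al_t = \infty$ to obtain \eqref{eq:514}, and both use the uniform positivity of a Class $\B$ function on an interval $[\e, M]$ to rule out a positive limit $W(\om)$. The only cosmetic difference is that you contradict the already-established $\liminf_{\tai} h_t(\om) = 0$ directly, whereas the paper contradicts the summability of $\sum_t \al_t h_t$; these are interchangeable final steps of the same argument.
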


\begin{proof}
By Lemma \ref{lemma:2}, there exists
a random variable $W$ such that $z_t(\om) \ap W(\om)$ as $\tai$
for almost all $\om \in \OM_0$.
This implies that $z_t$ is bounded almost surely.
This is Item 1.

Next we prove item 2.
Again from Lemma \ref{lemma:2},
\bd
\sum_{t=0}^\infty \al_t(\om) h_t(\om) < \infty , \fa \om \in \OM_0 .
\ed
Now, by definition
\bd
\sum_{t=0}^\infty \al_t(\om) = \infty , \fa \om \in \OM_0 \cap \OM_1 .
\ed
Therefore \eqref{eq:514} follows.
Next, suppose that, for some $\om \in \OM_0 \cap \OM_1$, we have that
$W(\om) > 0$, say $W(\om) =: 2 \e > 0$.
Choose a time $T$ such that $z_t(\om) \geq \e$ for all $t \geq T$.
Also, by Item 1,
\bd
M := \sup_{t \geq T} z_t(\om) < \infty .
\ed
Since $z_t(\om) \ap 2 \e$ as $\tai$, it is clear that $M \geq 2\e$.
Next, since $\eta(\cdot)$ belongs to Class $\B$, it follows that
\bd
c := \inf_{\e \leq r \leq M} \eta(r) > 0 .
\ed
So, for $t \geq T$, we have that
\bd
h_t(\om) \geq \eta(z_t(\om)) \geq c .
\ed
Now, if we discard all terms for $t < T$, we get
\bd
\sum_{t=T}^\infty \al_t(\om) h_t(\om) < \infty , \fa \om \in \OM_0 , \quad
\sum_{t=T}^\infty \al_t(\om) = \infty ,
h_t(\om) \geq c > 0 ,
\ed
which is clearly a contradiction.
Therefore the set of $\om \in \OM_0 \cap \OM_1$ for which $W(\om) > 0$ has zero measure
within $\OM_0 \cap \OM_1$.
In other words, $z_t(\om) \ap 0$ for (almost) all $\om \in \OM_0 \cap \OM_1$.
This is Item 2.
This completes the proof.
\end{proof}

Theorem \ref{thm:51} above shows only that $z_t$
converges to $0$ almost surely on sample paths in $\OM_0 \cap \OM_1$.
In this paper,  we are interested not only in the convergence
of various algorithms, but also on the \textit{rate} of convergence.
With this in mind, we now state and prove an extension of
Theorem \ref{thm:51} that provides such an estimate on rates.
For the purposes of this paper, we use the
following definition.

\begin{definition}\label{def:order}
Suppose $\{ Y_t \}$ is a stochastic process, and $\{ f_t \}$
is a sequence of positive numbers.
We say that
\ben
\item $Y_t = O(f_t)$ if $\{ Y_t / f_t \}$ is bounded almost surely.
\item $Y_t = \OM(f_t)$ if $Y_t$ is positive almost surely, and
$\{ f_t/Y_t \}$ is bounded almost surely.
\item $Y_t = \Th(f_t)$ if $Y_t$ is both $O(f_t)$ and $\OM(f_t)$.
\item $Y_t = o(f_t)$ if $Y_t /f_t \ap 0$ almost surely as $\tai$.
\een
\end{definition}

The next theorem is a modification of Theorem \ref{thm:51} that
provides bounds on the \textit{rate} of convergence.
Hence forth, all assumptions hold ``almost surely,'' that is,
along almost all sample paths.
Hence we drop this modifier hereafter, it being implicitly understood.

\begin{theorem}\label{thm:52}
Suppose $\{ z_t \} , \{ f_t \} , \{ g_t \} , \{ \al_t \}$ are
stochastic processes defined on some probability space $(\OM,\SI,P)$,
taking values in $[0,\infty)$, adapted to some
filtration $\{ \F_t \}$.
Suppose further that
\be\label{eq:523a}
E_t(z_{t+1} ) \leq (1 + f_t) z_t + g_t - \al_t z_t , \fa t ,
\ee
where
\bd
\sum_{t=0}^\infty f_t(\om) < \infty , \quad 
\sum_{t=0}^\infty g_t(\om) < \infty , \quad 
\sum_{t=0}^\infty \al_t(\om) = \infty .
\ed
Then $z_t = o(t^{-\l})$ for every $\l \in (0,1)$ such that
(i) there exists an integer $T \geq 1$ such that
\be\label{eq:523b}
\al_t(\om) - \l t^{-1} \geq 0 , \fa t \geq T ,
\ee
and in addition (ii)
\be\label{eq:523c}
\sum_{t=1}^\infty (t+1)^\l g_t(\om) < \infty , \quad 
\sum_{t=1}^\infty [ \al_t(\om) - \l t^{-1} ] = \infty .
\ee
\end{theorem}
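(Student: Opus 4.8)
The plan is to reduce the statement to Theorem \ref{thm:51} by applying that theorem to the \emph{reweighted} process $w_t := (t+1)^\lambda z_t$. By Definition \ref{def:order}, the conclusion $z_t = o(t^{-\lambda})$ is exactly $t^\lambda z_t \to 0$, and since $z_t \ge 0$ gives $0 \le t^\lambda z_t \le (t+1)^\lambda z_t = w_t$, it suffices to prove $w_t \to 0$. Fixing such a $\lambda$, the strategy is to show that $\{w_t\}$ obeys a Robbins--Siegmund recursion whose drift coefficient is $1$ plus a summable perturbation, minus a genuine decrement built from $\al_t - \lambda t^{-1}$.

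First I would derive the recursion for $w_t$. Multiplying \eqref{eq:523a} by $(t+2)^\lambda$ and substituting $z_t = (t+1)^{-\lambda} w_t$ gives
\[
E_t(w_{t+1}) \le r_t(1 + f_t - \al_t) w_t + (t+2)^\lambda g_t, \qquad r_t := \left( 1 + \frac{1}{t+1} \right)^\lambda .
\]
The two elementary facts that drive everything are the concavity bound $r_t \le 1 + \lambda/(t+1) \le 1 + \lambda t^{-1}$, valid for $\lambda \in (0,1)$, and the trivial $r_t \ge 1$, which yields $r_t \al_t \ge \al_t$. Combining them,
\[
r_t(1 + f_t - \al_t) - 1 = (r_t - 1) + r_t f_t - r_t \al_t \le 2^\lambda f_t - (\al_t - \lambda t^{-1}),
\]
so that, writing $\beta_t := \al_t - \lambda t^{-1}$,
\[
E_t(w_{t+1}) \le (1 + 2^\lambda f_t) w_t + (t+2)^\lambda g_t - \beta_t w_t .
\]
Here $\sum_t 2^\lambda f_t < \infty$, and $\sum_t (t+2)^\lambda g_t < \infty$ because $(t+2)^\lambda \le 2(t+1)^\lambda$ and $\sum_t (t+1)^\lambda g_t < \infty$ by \eqref{eq:523c}.

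This matches the hypothesis of Theorem \ref{thm:51} under the identifications $z_t \leftarrow w_t$, $f_t \leftarrow 2^\lambda f_t$, $g_t \leftarrow (t+2)^\lambda g_t$, $\al_t \leftarrow \beta_t$, and $h_t \leftarrow w_t$. The summability conditions defining $\OM_0$ hold, and $\OM_1$ has full measure since $\sum_t \beta_t = \infty$ by \eqref{eq:523c}. Choosing the Class $\B$ function to be the identity $\eta(r) = r$ (which lies in Class $\B$ by Definition \ref{def:Class-B}, since $\eta(0)=0$ and $\inf_{\e \le r \le M} r = \e > 0$) gives $h_t = w_t \ge \eta(w_t)$, so Item 2 of Theorem \ref{thm:51} yields $w_t \to 0$, hence $z_t = o(t^{-\lambda})$.

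The main obstacle is a bookkeeping matter rather than a conceptual one: condition \eqref{eq:523b} guarantees $\beta_t \ge 0$ only for $t \ge T$, where $T$ may depend on $\omega$, whereas Theorem \ref{thm:51} requires a $[0,\infty)$-valued decrement at every step. I would resolve this by splitting $\beta_t = \beta_t^+ - \beta_t^-$ and absorbing the negative part into the increment: since $\beta_t^- = 0$ for $t \ge T$, only finitely many terms per sample path are affected, so $\beta_t^- w_t$ can be moved across by replacing $2^\lambda f_t$ with $2^\lambda f_t + \beta_t^-$, whose sum remains finite almost surely. The decrement $\beta_t^+ w_t$ is then nonnegative for all $t$, and because $\beta_t^+ = \beta_t$ for $t \ge T$ the divergence $\sum_t \beta_t^+ = \infty$ is preserved. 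With this adjustment the verification above applies verbatim.
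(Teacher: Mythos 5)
Your proposal is correct and follows essentially the same route as the paper: rescale by $t^\l$ (you use $(t+1)^\l$, an immaterial difference), apply the concavity bound $(1+x)^\l \leq 1 + \l x$ to produce a Robbins--Siegmund recursion with decrement coefficient $\al_t - \l t^{-1}$, and invoke Item 2 of Theorem \ref{thm:51} with $\eta(r) = r$. The only departure is bookkeeping: where the paper simply restarts the recursion at time $T$, you absorb the finitely many negative values of $\al_t - \l t^{-1}$ into the summable perturbation term, which is a clean and equally valid way to meet the nonnegativity requirement of Theorem \ref{thm:51}.
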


\textbf{Remark:}
In \cite{Liu-Yuan-arxiv22}, it is assumed that $\al_t = \Th(t^{-(1-\th)})$.
No such assumption is made here.

\begin{proof}
Over the interval $(0,\infty)$,
the map $t \mapsto t^\l$ is concave for $\l \in (0,1)$.
It follows from the ``graph below the tangent'' property that
\be\label{eq:524}
(t+1)^\l \leq t^\l + \l t^{\l - 1} , \fa t > 0 .
\ee
This is the same as \cite[Eq.\ (25)]{Liu-Yuan-arxiv22} with the
substitution $\l = 1 - \e$.
Now a ready consequence of \eqref{eq:524} is
\bd
1 \leq \left( \frac{ t+1 }{t} \right)^\l \leq 1 + \l t^{-1} ,
\fa t > 0 .
\ed

Now we follow the suggestion of \cite[Lemma 1]{Liu-Yuan-arxiv22}
by recasting \eqref{eq:523a} in terms of $t^\l z_t$.
We can study\eqref{eq:523a} starting at time $T$, where $T$ is defined
in \eqref{eq:523b}.
If we multiply both sides of \eqref{eq:523a} by $(t+1)^\l$, 
and divide by $t^\l$ where appropriate, we get
\bd
E_t( (t+1)^\l z_{t+1}) \leq (1 + f_t)
\left( \frac{ t+1 }{t} \right)^\l t^\l z_t
+ (t+1)^\l g_t - \al_t \left( \frac{ t+1 }{t} \right)^\l t^\l z_t ,
\fa t \geq T .
\ed
Now we observe that, for all $t \geq T$, we have
\bd
- \al_t \left( \frac{ t+1 }{t} \right)^\l \leq - \al_t ,
\ed
\bd
(1 + f_t) \left( \frac{ t+1 }{t} \right)^\l 
\leq (1 + f_t) ( 1 + \l t^{-1} )
= 1 + f_t ( 1 + \l t^{-1} ) + \l t^{-1} .
\ed
If we now define the modified quantity $\zb_t = t^\l z_t$,
then the above bound can be rewritten as
\be\label{eq:525}
E_t(\zb_{t+1}) \leq [ 1 + f_t ( 1 + \l t^{-1} ) ] \zb_t
+ (t+1)^\l g_t - ( \al_t - \l t^{-1} ) \zb_t 
\fa t \geq T .
\ee
Since $1+\l t^{-1}$ is bounded over $t \geq T$,
it is obvious that
\bd
\sum_{t=T}^\infty f_t < \infty \imp \sum_{t=T}^\infty f_t 
( 1 + \l t^{-1} ) < \infty .
\ed
Moreover, by assumption,
\bd
\al_t - \l t^{-1} \geq 0 , \fa t \geq T .
\ed
Therefore we can apply Theorem \ref{thm:51}
to \eqref{eq:523b}, with $\eta(r) = r$, and deduce that $\zb_t \ap 0$
as $\tai$.
This is equivalent to $z_t = o(t^{-\l})$.
This completes the proof.
\end{proof}

\section{Convergence of Stochastic Gradient Descent}\label{sec:SGD}

In this section, we analyze the convergence of the Stochastic
Gradient Descent (SGD) algorithm of \eqref{eq:112a},
reproduced here for the convenience of the reader:
\be\label{eq:311}
\bth_{t+1} = \bth_t - \al_t \h_{t+1} ,
\ee
where $\h_{t+1}$ is a stochastic gradient.
As in \eqref{eq:1221a}, let us define
\be\label{eq:312}
\z_t = E_t(\h_{t+1}) , \quad
\x_t = \z_t - \gJ(\bth_t) ,\quad
\bzt = \h_{t+1} - \z_t .
\ee
The last equation in \eqref{eq:312} implies that $E_t(\bzt) = \bz$.
Therefore 
\be\label{eq:312a}
E_t( \nmeusq{\h_{t+1}} ) = \nmeusq{\z_t} + E_t \nmeusq{\bzt} .
\ee

In order to analyze the convergence of \eqref{eq:311},
we recall the standing assumptions on $J(\cdot)$, namely:
\ben
\item[(S1)] $J(\cdot)$ is $\C^1$, and $\gJ(\cdot)$ is globally
Lipschitz-continuous with constant $L$.                        
\item[(S2)] $J(\cdot)$ is bounded below.
Thus
\bd
J^* := \inf_{\bth \in \R^d} J(\bth) > - \infty .
\ed
Note that it is \textit{not} assumed that the infimum is actually attained.
\een
Recall from \eqref{eq:122a} that, as a consequence of these assumptions,
it follows that
\be\label{eq:122b}
\nmeusq{ \nabla J(\bth) } \leq 2L J(\bth) , \fa \bth \in \R^d .
\ee

We also recall the three properties to be 
used in proving various theorems, namely
\ben
\item[(PL)] There exists a constant $K$ such that
\bd
\nmeusq{\gJ(\bth)} \geq K J(\bth) , \fa \bth \in \R^d .
\ed
\item[(KL')] There exists a function $\psi(\cdot)$ of Class $\B$
such that
\bd
\nmeu{\gJ(\bth)} \geq \psi(J(\bth)) , \fa \bth \in \R^d .
\ed
\item[(NSC)]
This property consists of the following assumptions, taken together.
\ben
\item The function $J(\cdot)$ attains its infimum.
Therefore the set $S_J$ defined in \eqref{eq:121} is nonempty.
\item
The function $J(\cdot)$ has compact level sets.
For every constant $c \in (0,\infty)$, the level set
\bd
L_J(c) := \{ \bth \in \R^d : J(\bth) \leq c \}
\ed
is compact.
\item
There exists a number $r > 0$ and a continuous function 
$\eta : [0,r] \ap \R_+$ such that $\eta(0) = 0$, and
\bd
\r(\bth) \leq \eta(J(\bth)) , \fa \bth \in L_J(r) .
\ed
\een
\een

Until now, we have just specified properties of the objective
function $J(\cdot)$.
Next, we introduce two key assumptions about the nature of the stochastic
gradient $\h_{t+1}$.

\textbf{Assumption:}
Recall the quantities $\x_t$ and $\bzt$ defined in 
\eqref{eq:312}.
The assumption is that
there exist sequences of constants $\{ \mu_t \}$ and $\{ M_t \}$ such that
\be\label{eq:313}
\nmeu{\x_t} \leq \mu_t [ 1 + \nmeu{\gJt} ] , \fa \bth_t \in \R^d , \fa t ,
\ee
\be\label{eq:314}
E_t ( \nmeusq{\bzt}) \leq M_t^2 [ 1 + J(\bth_t) ] ,
\fa \bth_t \in \R^d , \fa t .
\ee

Now we compare and contrast the significance of these assumptions
with those elsewhere in the literature.
\ben
\item
Note that \eqref{eq:313} permits the stochastic gradient to be a
\textit{biased} estimate of $\gJt$.
This by itself is not unusual.
In several papers, assumptions of the form \eqref{eq:313} occur, but
\textit{without the $\nmeu{\gJt}$ term}.
We now give an example of a situation where the presence of this term 
arises naturally.
Consider the ``Coordinate Gradient Descent'' algorithm described
in \eqref{eq:1215}.
In the traditional approach, every coordinate is sampled uniformly 
at random, which explains the presence of the factor $d$ in the equation.
Now consider an ``off-policy'' type of coordinate sampling, in which,
at time $t$, the coordinates are sampled with a probability distribution
$\bphi_t$, which \textit{need not equal} the uniform distribution.
However, $\bphi_t \ap \u_d$ as $\tai$, where $\u_d$ is the uniform
distribution on a set of $d$ elements.
To analyze this case, let $I_t$ denote the coordinate chosen to be updated
at time $t$.
Then
\bd
I_t = i \wp \; \; \phi_{t,i} .
\ed
Hence the stochastic gradient can be computed as
\bd
\h_{t+1} = d [\gJt] \circ \eb_{I_t} \wp \; \; \phi_{t,i} ,
\ed
To estimate the quantity $\nmeu{\x_t}$ where $\x_t = E_t(\h_{t+1}) - \gJt$,
we use the notation $g_i$ for $[\gJt]_i$, for brevity.
Then
\bd
[ \h_{t+1} - \gJt ]_i
= \left\{ \ba{ll} (d-1) g_i , & \wp \; \; \phi_{t,i} , \\
-g_i, & \wp \; \; \phi_{t,j} , j \neq i . \ea \right.
\ed
Therefore, with $\x_t = E_t(\h_{t+1} - \gJt)$ as earlier, we have that
\begin{eqnarray*}
x_{t,i} & = & (d-1) g_i \phi_{t,i} - \sum_{j \neq i} g_i \phi_{t,j} 
= d g_i \phi_{t,i} - g_i \sum_{j=1}^d \phi_{t,j} \\
& = & ( d \phi_{t,i} - 1 ) g_i = d ( \phi_{t,i} - u_i ) g_i ,
\end{eqnarray*}
where $u_i = 1/d$ is the $i$-th component of the uniform distribution
(for each $i$).
Summing over $i$ leads to
\begin{eqnarray*}
\nmm{\x_t}_1 & = & d \sum_{i=1}^d | ( \phi_{t,i} - u_i ) | \cdot | g_i | \\
& \leq & d \nmm{\bphi_t - \u_d}_1 \nmi{\gJt}  ,
\end{eqnarray*}
where $\nmm{\bphi_t - \u_d}_1$ denotes the $\ell_1$ distance
between $\bphi_t$ and $\u_d$.
Next, after observing that $\nmi{\v} \leq \nmeu{\v} \leq \nmm{\v}_1$,
we arrive finally at
\bd
\nmeu{\x_t} \leq d \nmm{\bphi_t - \u}_1 \nmeu{\gJt} ,
\ed
which is a special case of \eqref{eq:313}.
Note that, when the ``off-policy'' sampling probability distribution is
not the uniform distribution, the presence of the term $\nmeu{\gJt}$
in \eqref{eq:313} is unavoidable.
\item
Next we discuss \eqref{eq:314}.
One can compare \eqref{eq:314} with the so-called Expected
Smoothness condition proposed as Assumption 2
in \cite{Khaled-Rich-arxiv20}, namely
\be\label{eq:314a}
E_t( \nmeusq{\h_{t+1}}) \leq 2 A J(\bth_t) + B \nmeusq{\gJt} + C ,
\ee
for suitable constants $A, B, C$.
This is proposed as ``the weakest assumption'' for analyzing
the convergence of SGD for nonconvex functions.
If $J(\cdot)$ satisfies Assumptions (S1) and (S2), then we can apply
Lemma \ref{lemma:31}.
As a result, the term $B\nmeusq{\gJt}$
can be bounded by $2BL J(\bth_t)$, resulting in
\be\label{eq:314b}
E_t( \nmeusq{\h_{t+1}}) \leq 2 (A + BL) J(\bth_t) + C
\leq M(1 + J(\bth_t) ) ,
\ee
where 
\bd
M =  \max \{2 (A + BL) , C \} .
\ed
Thus, for functions $J(\cdot)$ satisfying Assumptions (S1) and (S2), the present
assumption \eqref{eq:314} is weaker than \eqref{eq:314a}.
Also, the various constants in \eqref{eq:314a} are bounded with respect to $t$,
whereas in \eqref{eq:314}, the bound $M_t$ is allowed to be unbounded
with respect to $t$.
As shown long ago in \cite{Kief-Wolf-AOMS52}, permitting the variance
to be unbounded with time is an essential feature in analyzing SGD
based on function evaluations alone.
\een

With these assumptions, we state the first convergence result,
which \textit{does not have} any conclusions about the rate of convergence.
As always, these bounds and conclusions hold almost surely.

\begin{theorem}\label{thm:61}
Suppose the objective function $J(\cdot)$ satisfies the standing assumptions
(S1) and (S2), and that the stochastic gradient $\h_{t+1}$ satisfies
\eqref{eq:313} and \eqref{eq:314}.
With these assumptions, we have the following conclusions;
\ben
\item Suppose
\be\label{eq:315}
\sum_{t=0}^\infty \al_t^2 < \infty , \quad 
\sum_{t=0}^\infty \al_t \mu_t  < \infty , \quad 
\sum_{t=0}^\infty \al_t^2 M_t^2 < \infty .
\ee
Then $\{ \gJ(\bth_t) \}$ and $\{ J(\bth_t) \}$ are bounded, and in addition,
$J(\bth_t)$ converges to some random variable as $\tai$.
\item If in addition $J(\cdot)$ satisfies (KL'), and
\be\label{eq:316}
\sum_{t=0}^\infty \al_t = \infty ,
\ee 
then $J(\bth_t) \ap 0$ and $\gJ(\bth_t) \ap \bz$ as $\tai$.
\item Suppose that in addition to (KL'), $J(\cdot)$ also satisfies (NSC),
and that \eqref{eq:315} and \eqref{eq:316} both hold.
Then $\r(\bth_t) \ap 0$ as $\tai$.
\een
\end{theorem}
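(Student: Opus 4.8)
The plan is to reduce each of the three claims to an application of Theorem \ref{thm:51}, with $z_t = J(\bth_t)$ playing the role of the nonnegative, supermartingale-like process. First I would invoke the standard quadratic upper bound associated with Assumption (S1) (the same inequality from \cite{Ber-Tsi-SIAM00} used in Lemma \ref{lemma:31}), applied to $\bphi = \bth_{t+1} = \bth_t - \al_t\h_{t+1}$ and $\bth = \bth_t$, to obtain
\bd
J(\bth_{t+1}) \leq J(\bth_t) - \al_t \IP{\gJt}{\h_{t+1}} + \frac{L\al_t^2}{2}\nmeusq{\h_{t+1}} .
\ed
Taking the conditional expectation $E_t(\cdot)$ and using the decomposition $\z_t = \gJt + \x_t$ together with \eqref{eq:312a}, this becomes
\bd
E_t(J(\bth_{t+1})) \leq J(\bth_t) - \al_t\nmeusq{\gJt} - \al_t\IP{\gJt}{\x_t} + \frac{L\al_t^2}{2}\left[ \nmeusq{\z_t} + E_t(\nmeusq{\bzt}) \right] .
\ed

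The core of the argument (Item 1) is to massage the three error terms on the right into the Robbins--Siegmund form $(1+f_t)J(\bth_t) + g_t - \al_t h_t$ with $h_t = \nmeusq{\gJt}$. For the cross term I would use \eqref{eq:313} together with the elementary bound $\nmeu{\gJt} \leq \tfrac12(1 + \nmeusq{\gJt})$ to split $-\al_t\IP{\gJt}{\x_t}$ into a multiple of $\al_t\mu_t$ and a multiple of $\al_t\mu_t\nmeusq{\gJt}$; for $\nmeusq{\z_t}$ I would expand $\nmeusq{\z_t} \leq 2\nmeusq{\gJt} + 2\nmeusq{\x_t}$ and again invoke \eqref{eq:313}; and for the last term I would apply \eqref{eq:314} directly. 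Every surviving occurrence of $\nmeusq{\gJt}$ carrying a coefficient that tends to zero is then converted into a multiple of $J(\bth_t)$ via \eqref{eq:122b} and absorbed into $f_t$; the genuinely constant contributions (those carrying the ``$1+$'' in \eqref{eq:313}--\eqref{eq:314}) are absorbed into $g_t$; and the leading descent term $-\al_t\nmeusq{\gJt}$ is retained in full. Reading off the resulting $f_t$ and $g_t$, one checks that each is a finite combination of $\al_t\mu_t$, $\al_t^2$, $\al_t^2\mu_t^2$, and $\al_t^2 M_t^2$, all of which are summable under \eqref{eq:315}. Since these sequences are deterministic, $P(\OM_0)=1$, and Item 1 of Theorem \ref{thm:51} yields boundedness and almost sure convergence of $J(\bth_t)$ to a random variable $W$; boundedness of $\{\gJ(\bth_t)\}$ then follows at once from \eqref{eq:122b}.

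For Item 2 I would keep the same recursion, now additionally assuming (KL') and $\sum_t\al_t = \infty$, so that $P(\OM_1)=1$. Setting $\eta(r) := \psi(r)^2$, hypothesis \eqref{eq:124} gives $h_t = \nmeusq{\gJt} \geq \eta(J(\bth_t))$, and $\eta$ is of Class $\B$ because $\psi$ is (its infimum over any $[\e,M]$ is the square of $\inf_{[\e,M]}\psi > 0$). Item 2 of Theorem \ref{thm:51} then forces $J(\bth_t)\ap 0$, whence $\gJ(\bth_t)\ap\bz$ by \eqref{eq:122b}. Item 3 is immediate: once $J(\bth_t) < r$ --- which holds for all large $t$ since $J(\bth_t)\ap 0$ --- the iterate lies in the level set $L_J(r)$, so the (NSC) inequality $\r(\bth_t) \leq \eta(J(\bth_t))$ together with continuity of $\eta$ at $0$ gives $\r(\bth_t)\ap 0$.

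The main obstacle I anticipate is purely in the bookkeeping of Item 1: arranging the inequality so that a \emph{nonnegative} coefficient multiplies the retained descent term while \emph{every} leftover coefficient on $J(\bth_t)$ and every constant term is summable, using exactly the three hypotheses in \eqref{eq:315} and nothing more. The one nonobvious step is controlling the quadratic-in-$\mu_t$ contributions coming from $\nmeusq{\z_t}$ and from the cross term: these produce a factor $\al_t^2\mu_t^2 = (\al_t\mu_t)^2$, whose summability I would deduce from $\sum_t \al_t\mu_t < \infty$ via the fact that $\al_t\mu_t \ap 0$, so that $(\al_t\mu_t)^2 \leq \al_t\mu_t$ for all large $t$. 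Everything else is a routine consequence of \eqref{eq:122b} and the Cauchy--Schwarz and $2ab \le a^2 + b^2$ inequalities.
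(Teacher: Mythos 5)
Your proposal is correct and follows essentially the same route as the paper's own proof: the descent inequality from \cite{Ber-Tsi-SIAM00}, reduction to the Robbins--Siegmund form of Theorem \ref{thm:51} with $z_t = J(\bth_t)$ and $h_t = \nmeusq{\gJt}$, conversion of stray gradient-norm terms into multiples of $J(\bth_t)$ via \eqref{eq:122b}, and the same summability bookkeeping (your observation that $(\al_t\mu_t)^2$ is summable because $\ell_1 \subset \ell_2$ is exactly the paper's argument). Your only departures are cosmetic ones that slightly streamline the algebra --- bounding $\nmeusq{\z_t} \leq 2\nmeusq{\gJt} + 2\nmeusq{\x_t}$ rather than expanding the cross term, and explicitly verifying that $\eta = \psi^2$ is of Class $\B$, a detail the paper leaves implicit.
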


\textbf{Remarks}
\ben
\item Note that if $\mu_t = 0$ for all $t$ (unbiased measurements),
and $M_t^2$ is bounded, then the second condition is \eqref{eq:315} is
automatically satisfied, and the third condition is implied by the first.
In such a case, $\al_t$ is automatically bounded.
Thus \eqref{eq:315} become the first part of the Robbins-Monro conditions
in \eqref{eq:112d}.
Under these conditions, the two sequences
$\{ \gJ(\bth_t) \}$ and $\{ J(\bth_t) \}$ are bounded.
This by itself is not enough to ensure that the iterations $\{ \bth_t \}$
are bounded.
That can be inferred if the objective function $J(\cdot)$ is ``coercive,''
that is, has compact level sets.
\item In the literature, various conditions are imposed on $E_t(\z_t)$
(the conditional mean of the stochastic gradient) and $E_t(\nmeusq{\bzt})$
(the conditional variance of the stochastic gradient.
Equation \eqref{eq:315} includes these as special cases.
\item If in addition to \eqref{eq:315}, the step size sequence
diverges, and $J(\cdot)$ satisfies the property (KL'), then the sequences
$\{ \gJ(\bth_t) \}$ and $\{ J(\bth_t) \}$ converge to the desired
limits of $\bz$ and $0$ respectively.
However, this is not sufficient to infer that the iterations $\bth_t$
converge to the set of global minima.
The addition of property (NSC) allows us to infer that $\bth_t \ap \bz$
as $\tai$.
\item As the proof of the theorem (below) shows, $J(\bth_t) \ap 0$ and
(NSC) together imply that $\bth_t$ converges to the set of global minima.
The mechanism used to infer that $J(\bth_t) \ap 0$ does not matter.
\een

\begin{proof}
The proof is based on Theorem \ref{thm:51}.
It follows from applying  \cite[Eq.\ (2.4)]{Ber-Tsi-SIAM00} to
\eqref{eq:112a} that
\bd
J(\bth_{t+1}) \leq J(\bth_t) - \al_t \IP{\gJt}{\h_{t+1}} + \frac{\al_t^2 L}{2}
\nmeusq{\h_{t+1} } .
\ed
Applying the operator $E_t$ to both sides, using the definitions in
\eqref{eq:312}, and applying \eqref{eq:312a}, gives
\be\label{eq:317}
E_t(J(\bth_{t+1})) \leq J(\bth_t) - \al_t \IP{ \gJt}{\z_t} 
+ \frac{\al_t^2 L}{2} [ \nmeusq{\z_t} + E_t ( \nmeusq{\bzt} ) .
\ee
We will bound each term separately, repeatedly using \eqref{eq:313},
\eqref{eq:314}, Schwarz' inequality, and the obvious inequality
\bd
2a \leq 1 + a^2 , \fa a \in \R .
\ed
First,
\begin{eqnarray*}
\IP{ \gJt}{\z_t} & = & \nmeusq{\gJt} + \IP{\gJt}{\x_t} \\
& \geq & \nmeusq{\gJt} - \nmeu{\gJt} \cdot \nmeu{\x_t} .
\end{eqnarray*}
Now
\beq
\nmeu{\gJt} \cdot \nmeu{\x_t} & \leq & \mu_t \nmeu{\gJt} [ 1 + \nmeu{\gJt} ] 
\nonumber \\
& = & \mu_t \nmeu{\gJt} + \mu_t \nmeusq{\gJt} \nonumber \\
& \leq & 0.5 \mu_t + 1.5 \mu_t  \nmeusq{\gJt} ] \label{eq:317a} \\
& \leq & \mu_t + 2 \mu_t \nmeusq{\gJt} 
\leq \mu_t + 4 \mu_t L J(\bth_t) . \label{eq:318}
\eeq
In the last equation
we have replaced $0.5$ by $1$ just to avoid dealing with fractions,
and have also used \eqref{eq:122b}.
Hence
\begin{eqnarray*}
- \al_t \IP{ \gJt}{\z_t} & \leq & - \al_t \nmeusq{\gJt}
+ \al_t \nmeu{\gJt} \cdot \nmeu{\x_t} \\
& \leq & - \al_t \nmeusq{\gJt} + \al_t \mu_t + 4 \al_t \mu_t L J(\bth_t) .
\end{eqnarray*}
Next,
\begin{eqnarray*}
\nmeusq{\z_t} & \leq & \nmeusq{\gJt} + 2 \nmeu{\gJt} \cdot {\nmeu{\x_t} }
+ \nmeusq{\x_t} \\
& \leq & \nmeusq{\gJt} + \mu_t + 3 \mu_t \nmeusq{\gJt}
+ \nmeusq{\x_t} \\
& \leq & \mu_t + 2L (1+3 \mu_t) J(\bth_t) + \nmeusq{\x_t} .
\end{eqnarray*}
Note that here we use the tighter estimate from \eqref{eq:317a}.
Next,
\begin{eqnarray*}
\nmeusq{\x_t}  & \leq & \mu_t^2 [ 1 + \nmeu{\gJt} ]^2 
= \mu_t^2 [ 1 + 2 \nmeu{\gJt} + \nmeusq{\gJt} ] \\
& \leq & 2 \mu_t^2 [ 1 + \nmeusq{\gJt} ] \leq 2 \mu_t^2 [ 1 + 2L J(\bth_t) ] .
\end{eqnarray*}
Substituting into the above gives the bound
\bd
\nmeusq{\z_t} \leq \mu_t + 2 \mu_t^2 + 2L ( 1 + 3 \mu_t + 2 \mu_t^2 ) J(\bth_t) .
\ed
Finally, by assumption \eqref{eq:314},
\bd
E_t( \nmeusq{\bzt}) 
\leq M_t^2 [ 1 + 2L J(\bth_t) ] .
\ed
Substituting these bounds into \eqref{eq:317} gives a bound to which
Theorem \ref{thm:51} can be applied, namely:
\be\label{eq:319}
E_t( J(\bth_{t+1} ) ) \leq (1+f_t) J(\bth_t) + g_t 
- \al_t \nmeusq{\gJt} ,
\ee
where
\be\label{eq:3110}
f_t = 2L [ 2 \al_t \mu_t + \frac{L}{2} \al_t^2 ( 1 + 3 \mu_t + 2 \mu_t^2 ) 
+ \al_t^2 M_t^2 ] ,
\ee
\be\label{eq:3111}
g_t = \al_t \mu_t + 
\frac{L}{2} \al_t^2 ( \mu_t + 2 \mu_t^2 + M_t^2 ) .
\ee
Now it is straight-forward to verify that the conditions in \eqref{eq:315}
suffice to establish that both sequences $\{ f_t \}$ and
$\{ g_t \}$ are summable.
There are five different terms occuring in \eqref{eq:3110} and
\eqref{eq:3111}, namely
\bd
\al_t^2 , \;
\al_t \mu_t , \; \al_t^2 \mu_t ,\;  \al_t^2 \mu_t^2 ,\;  \al_t^2 M_t^2 .
\ed
Now \eqref{eq:315} states that $\{ \al_t^2 \}$,
$\{ \al_t \mu_t \}$ and $\{ \al_t^2 M_t^2 \}$ are summable.
The first condition implies that $\al_t$ is bounded, 
which implies that $\{ \al_t^2 \mu_t \}$ is also summable.
Finally, since every summable sequence is also square-summable
($\ell_1$ is a subset of $\ell_2)$, $\{ \al_t^2 \mu_t^2 \}$
is also summable.
Since all the conditions needed to apply Item 1 of Theorem \ref{thm:51} hold,
it follows that $\{ J(\bth_t) \}$ is bounded and
converges to some random variable.
Now Lemma \ref{lemma:31} implies that $\gJt$ is also bounded.
This establishes the Item 1 of the theorem.

To prove Item 2, note that if property (KL') holds, then Item 2 of
Theorem \ref{thm:51} applies, and $J(\bth_t) \ap 0$ as $\tai$.

Finally, Item 3 is a ready consequence of $J(\bth_t) \ap 0$ and property (NSC).
This completes the proof.
\end{proof}

Next we strengthen Assumption (KL') to (PL), and prove an estimate
for the \textit{rate} of convergence.

\begin{theorem}\label{thm:62}
Let various symbols be as in Theorem \ref{thm:61}.
Suppose $J(\cdot)$ satisfies the standing assumptions (S1) and (S2)
and also property (PL),
and that \eqref{eq:315} and \eqref{eq:316} hold.
Further, suppose there exist constants $\g > 0$ and $\d \geq 0$ such
that
\bd
\mu_t = O(t^{-\g}), \quad
M_t = O(t^\d) , \fa t \geq 1 ,
\ed
where we take $\g = 1$ if $\mu_t = 0$ for all sufficiently large $t$,
and $\d = 0$ if $M_t$ is bounded.
Choose the step-size sequence $\{ \al_t \}$ as
$O(t^{-(1-\phi)})$ and $\OM(t^{-(1-C)})$
where $\phi$ and $C$ are chosen to satisfy
\bd
0 < \phi < \min \{ 0.5 - \d , \g \} , \quad
C \in (0,\phi] .
\ed
Define
\be\label{eq:3112}
\nu := \min \{ 1 - 2( \phi + \d) , \g - \phi \} .
\ee
Then $\nmeusq{\gJt} = o(t^{-\l})$ and $J(\bth_t) = o(t^{-\l})$
for every $\l \in (0,\nu)$.
In particular, by choosing $\phi$ very small, it follows that
$\nmeusq{\gJt} = o(t^{-\l})$ and $J(\bth_t) = o(t^{-\l})$ whenever
\be\label{eq:3113}
\l < \min \{ 1 - 2 \d , \g \} .
\ee
\end{theorem}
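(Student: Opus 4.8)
The plan is to upgrade the one-step inequality already obtained in the proof of Theorem \ref{thm:61} using property (PL), and then to feed the result into Theorem \ref{thm:52}. Recall that the proof of Theorem \ref{thm:61} established \eqref{eq:319}, namely $E_t(J(\bth_{t+1})) \leq (1+f_t)J(\bth_t) + g_t - \al_t \nmeusq{\gJt}$, with $f_t, g_t$ given by \eqref{eq:3110} and \eqref{eq:3111}. Invoking (PL) in the form $\nmeusq{\gJt} \geq K J(\bth_t)$ to bound the last term by $-K\al_t J(\bth_t)$, I would obtain
\bd
E_t(J(\bth_{t+1})) \leq (1+f_t) J(\bth_t) + g_t - (K \al_t) J(\bth_t) .
\ed
This is exactly hypothesis \eqref{eq:523a} of Theorem \ref{thm:52} with $z_t = J(\bth_t)$ and with the quantity playing the role of ``$\al_t$'' there being $K\al_t$. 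The summability of $\{f_t\}$ and $\{g_t\}$ was already verified in the proof of Theorem \ref{thm:61} under \eqref{eq:315}, and $\sum_t K\al_t = \infty$ follows from \eqref{eq:316}; hence the standing hypotheses of Theorem \ref{thm:52} hold, and it remains only to check, for each $\l \in (0,\nu)$, the two $\l$-dependent conditions \eqref{eq:523b} and \eqref{eq:523c}.

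For condition \eqref{eq:523b} and the divergence half of \eqref{eq:523c}, I would use the lower bound $\al_t = \OM(t^{-(1-C)})$ with $C > 0$. Writing $t^{-(1-C)} = t^C \cdot t^{-1}$ and noting $t^C \ap \infty$, there is a constant $c' > 0$ with $K\al_t \geq K c' t^{C} t^{-1}$; this exceeds $\l t^{-1}$ for all sufficiently large $t$, giving \eqref{eq:523b}, and moreover $K\al_t - \l t^{-1} \geq \half K c' t^{C-1}$ eventually, whose sum diverges since $C - 1 > -1$. For the summability half of \eqref{eq:523c}, i.e.\ $\sum_t (t+1)^\l g_t < \infty$, I would expand $g_t$ from \eqref{eq:3111} into its four monomials $\al_t\mu_t$, $\al_t^2\mu_t$, $\al_t^2\mu_t^2$, $\al_t^2 M_t^2$ and track their exponents via $\al_t = O(t^{\phi-1})$, $\mu_t = O(t^{-\g})$, $M_t^2 = O(t^{2\d})$. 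The binding terms are $\al_t\mu_t$, which multiplied by $t^\l$ has exponent $\l + \phi - 1 - \g$ and is summable iff $\l < \g - \phi$, and $\al_t^2 M_t^2$, whose exponent $\l + 2\phi - 2 + 2\d$ forces $\l < 1 - 2(\phi+\d)$; the two remaining terms give the strictly weaker constraints $\l < 1 - 2\phi + \g$ and $\l < 1 - 2\phi + 2\g$. Taking the intersection reproduces exactly $\l < \nu$ with $\nu$ as in \eqref{eq:3112}.

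Theorem \ref{thm:52} then yields $J(\bth_t) = o(t^{-\l})$ for every $\l \in (0,\nu)$. The companion bound on the gradient is immediate from \eqref{eq:122b}: since $\nmeusq{\gJt} \leq 2L J(\bth_t)$, the same rate gives $\nmeusq{\gJt} = o(t^{-\l})$. Finally, the ``in particular'' claim \eqref{eq:3113} follows by letting $\phi \ap 0^+$: because $\d < 0.5$ and $\g > 0$, the constraints $0 < \phi < \min\{0.5 - \d, \g\}$ and $C \in (0,\phi]$ remain satisfiable for arbitrarily small $\phi$, and $\nu \ap \min\{1 - 2\d, \g\}$, so any $\l < \min\{1 - 2\d, \g\}$ is admissible. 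The main obstacle is the exponent bookkeeping in the summability check: one must confirm that among the four terms of $g_t$ it is precisely $\al_t\mu_t$ and $\al_t^2 M_t^2$ that bind, and that they regenerate the two components of $\nu$.
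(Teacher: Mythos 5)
Your proposal is correct and follows essentially the same route as the paper's own proof: invoke (PL) to convert the term $-\al_t\nmeusq{\gJt}$ in \eqref{eq:319} into $-K\al_t J(\bth_t)$, apply Theorem \ref{thm:52} with $z_t = J(\bth_t)$, use the lower bound $\al_t = \OM(t^{-(1-C)})$ to verify \eqref{eq:523b} and the divergence condition, and check summability of $(t+1)^\l g_t$ by exponent bookkeeping to recover $\l < \nu$ (the paper lumps the monomials into $g_t = O(t^{-(1+\nu)})$ rather than tracking the four terms separately, but the computation is identical). Your explicit derivation of the gradient rate from \eqref{eq:122b} and the limiting argument $\phi \ap 0^+$ for \eqref{eq:3113} also match the paper.
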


\begin{proof}
Recall the bound \eqref{eq:319} and the definitions of $f_t, g_t$
from \eqref{eq:3110} and \eqref{eq:3111} respectively.
Replacing the property (KL') by property (PL) allows us to
replace the term $-\al_t \nmeusq{\gJt}$ in \eqref{eq:319} 
by $-\al_t K J(\bth_t)$.
This makes Theorem \ref{thm:52} applicable to the resulting bound.
Under the stated hypotheses, it readily follows that
\bd
\al_t^2 = O( t^{-2 + 2 \phi } ) ,
\al_t^2 M_t^2  = O( t^{-2 + 2(\phi +\d)} ) ,
\al_t \mu_t = O(t^{-1 + \phi - \g}) .
\ed
Now define $\nu$ as in \eqref{eq:3112}.
Then each of the above three terms is $O(t^{-(1+\nu)})$, while
both $\{ \al_t^2 \mu_t^2 \}$ and $\{ \al_t^2 \mu_t \}$ decay even faster.
Hence, with $\nu$ defined as in \eqref{eq:3112},
\bd
f_t , g_t = O(t^{-(1 + \nu)}) ,
\ed
and both sequences are summable.

Now we are in a position to apply Theorem \ref{thm:52}.
We can conclude that $J(\bth_t) = o(t^{-\l})$ whenever
$2 \al_t - \l t^{-1} \geq 0$ for sufficiently large $t$, and
\bd
\{ (t+1)^\l g_t \} \in \ell_1 ,
\ed
\be\label{eq:3114}
\sum_{t=1}^\infty [ 2 \al_t - \l t^{-1} ] = \infty .
\ee
Now observe that $2 \al_t = \OM(t^{-(1-C)})$, and $C > 0$.
Choose a contant $D$ such that $2 \al_t \geq D t^{-(1-C)}$ for sufficiently
large $t$.
Then, whatever be the value of $\l$, it is clear that 
\bd
D t^{-(1-C)} - \l t^{-1} \geq 0 
\ed
for sufficiently large $t$.
Also, since $C > 0$, it is evident that $\al_t$ decays more slowly than
$\l t^{-1}$.
Hence \eqref{eq:3114} is satisfied.
Thus the last step of the proof is to determine conditions under which
$\{ (t+1)^\l g_t \} \in \ell_1$.
Since $g_t = O(t^{-(1+\nu)})$, it follows that $(t+1)^\l g_t 
= O(t^{-(1+\nu-\l)})$, which is summable if $\l < \nu$.
Hence it follows that $J(\bth_t) = o(t^{-\l})$ whenever $\l < \nu$.

To prove the last statement, observe that, while there is an upper bound
on $\phi$, namely $\min \{ 0.5 - \d , \g \}$, there is no lower bound.
So we can choose $\phi = \e$, a very small number.
This leads to
\bd
\l < \nu = \min \{ 1 - 2 \d - 2 \e , \g - \e \} .
\ed
But since $\e$ can be made arbitrarily small, this translates to
\eqref{eq:3113}.
This completes the proof.
\end{proof}

\begin{corollary}\label{coro:61}
Suppose all hypotheses of Theorem \ref{thm:62} hold.
In particular, if $\mu_t = 0$ for all large enough $t$ in \eqref{eq:313},
and $M_t$ in \eqref{eq:314} is bounded with respect to $t$, then
$\nmeusq{\gJt} = o(t^{-\l})$ and $J(\bth_t) = o(t^{-\l})$ for all $\l < 1$.
\end{corollary}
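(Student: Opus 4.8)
The plan is to obtain the corollary as a degenerate parameter case of Theorem \ref{thm:62}, so that no new estimate has to be proved at all. First I would observe that the two extra hypotheses are precisely the boundary cases already built into the statement of Theorem \ref{thm:62}: the convention there stipulates that $\g = 1$ whenever $\mu_t = 0$ for all sufficiently large $t$, and $\d = 0$ whenever $M_t$ is bounded in $t$. Hence the present hypotheses simply fix $\g = 1$ and $\d = 0$, and nothing further needs to be assumed.

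Next I would verify that the admissible window for the auxiliary exponent $\phi$ remains nonempty under these values, so that Theorem \ref{thm:62} really does apply. With $\d = 0$ and $\g = 1$, the requirement $0 < \phi < \min \{ 0.5 - \d , \g \}$ becomes $0 < \phi < 0.5$, and $C$ may be taken anywhere in $(0,\phi]$; both ranges are nonempty. Thus one may choose a step-size sequence that is simultaneously $O(t^{-(1-\phi)})$ and $\OM(t^{-(1-C)})$, and all the hypotheses of Theorem \ref{thm:62} are met.

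Then I would substitute $\g = 1$ and $\d = 0$ into the final conclusion \eqref{eq:3113} of Theorem \ref{thm:62}, which asserts that $\nmeusq{\gJt} = o(t^{-\l})$ and $J(\bth_t) = o(t^{-\l})$ whenever $\l < \min \{ 1 - 2 \d , \g \}$. The right-hand side collapses to $\min \{ 1 , 1 \} = 1$, giving exactly the claimed range $\l < 1$. Equivalently, one can argue through the intermediate quantity $\nu$ of \eqref{eq:3112}, which here reduces to $\nu = \min \{ 1 - 2\phi , 1 - \phi \} = 1 - 2\phi$; since $\phi$ may be pushed arbitrarily close to $0$, the attainable exponent $\nu$ can be made arbitrarily close to $1$, which is the content of the corollary.

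Because the corollary is a pure specialization, there is essentially no obstacle to overcome: the only two points requiring confirmation are that the stated hypotheses coincide with the sign conventions $\g = 1$, $\d = 0$ in Theorem \ref{thm:62}, and that the resulting constraint set on $(\phi,C)$ is nonempty. In particular, the gradient estimate $\nmeusq{\gJt} = o(t^{-\l})$ needs no separate argument, as it is already delivered by Theorem \ref{thm:62} through Lemma \ref{lemma:31}, i.e.\ the bound \eqref{eq:122b} controlling $\nmeusq{\gJt}$ by $2L J(\bth_t)$.
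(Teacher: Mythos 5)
Your proposal is correct and follows essentially the same route as the paper: the paper's own proof simply takes $\g = 1$ and $\d = 0$ under the stated hypotheses and substitutes into \eqref{eq:3113} to obtain $\l < \min\{1-2\d,\g\} = 1$. Your additional checks (nonemptiness of the $(\phi,C)$ window and the alternate passage through $\nu$ in \eqref{eq:3112}) are harmless elaborations of the same specialization argument.
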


The proof is immediate from Theorem \ref{thm:62}.
With $\mu_t = 0$, one can take $\g = 1$, and with $M_t$ being bounded, one
can take $\d = 0$.
Substituting these into \eqref{eq:3113} leads to the desired conclusion.

\textbf{Remark:}
It is worthwhile to compare the content of Corollary \ref{coro:61}
with the bounds from \cite{Arjevani-et-al-MP23},
as summarized in Section \ref{ssec:33}.
In that paper, it is assumed that $\z_t = \gJ(\bth_t)$, and that there
is a finite constant $M$ such that $CV_t(\h_{t+1}) \leq M^2$;
see \cite[Eq.\ (2)]{Arjevani-et-al-MP23}.
In the present notation, this is the same as saying that $\mu_t = 0$ for
all $t$, and that $M_t = M$ for all $t$.
With these assumptions on the stochastic gradient, it is shown that
for an arbitrary convex function, the best achievable rate for a convex
objective function is that $\nmeu{\gJt} = O(t^{-1/2})$.
Now suppose a function $J(\cdot)$ satisfies both Standing Assumptions (S1), (S2)
and the (PL) property.
Thus there exists a constant $K$ such that
\bd
K J(\bth_t) \leq \nmeusq{\gJt} \leq 2L J(\bth_t) .
\ed
Then, as per Corollary \ref{coro:61}, it follows that
$J(\bth_t) = o(t^{-\l})$ and $\nmeusq{\gJt} = o(t^{-\l})$
for every $\l<1$.
There is virtually no difference between $O(t^{-1})$ and $o(t^{-\l})$
for all $\l < 1$.
Thus our results extend the bounds from \cite{Arjevani-et-al-MP23}
from convex functions to a somewhat larger class, namely those that satisfy
Assumptions (S1) and (S2) as well as the Polyak-Lojasiewicz condition.

Next, we study stochastic gradient methods based on function evaluations alone.
The Simultaneous Perturbation SA (SPSA), described in \eqref{eq:1221},
is typical of this approach.
In this equation, two function evaluations are used at each step;
however, there exist approaches that use only one function evaluation
at each step.
For the stochastic gradient of \eqref{eq:1221}, the quantities $\mu_t$
and $M_t$ satisfy
\be\label{eq:3115}
\mu_t = O(c_t), \quad M_t^2 = (1/c_t^2) .
\ee
A more general approach, somewhat reminiscent of the Runge-Kutta method,
is proposed in \cite{Pach-Bhat-Pras-arxiv22}, wherein $k+1$ function
evaluations are used at each step, leading to
\be\label{eq:3116}
\mu_t = O(c_t^k), \quad M_t^2 = (1/c_t^2) ,
\ee
which reduces to the above when $k = 1$.
This observation raises the question as to whether there is an
``optimal'' choice of the ``increment'' $c_t$, so as to achieve the
fastest convergence.
Specifically, suppose we choose $c_t = \Th(t^s)$ for some exponent $s$.
What is the choice of $s$ that maximizes the bound $\nu$ in \eqref{eq:3112}?

\begin{corollary}\label{coro:62}
Suppose all hypotheses of Theorem \ref{thm:62} hold.
Suppose $\mu_t$, $M_t$ satisfy \eqref{eq:3115} for arbitrary
increment $c_t$, and that $c_t = \Th(t^{-1})$.
Then the optimal choice for the exponent $s$ is $1/3$.
Then, with $\al_t = O(t^{-(1-\phi)})$, by choosing 
$\phi = \e > 0$ arbitrarily small, and $s = (1-\e)/3$, we get
\be\label{eq:3117}
J(\bth_t) , \nmeusq{\gJt} = o(t^{-\l}) , \fa \l < 1/3 .
\ee
More generally, suppose $\mu_t$, $M_t$ satisfy \eqref{eq:3116}
for arbitrary increment $c_t$.
Then, with $\al_t = O(t^{-(1-\phi)})$, by choosing 
$\phi = \e > 0$ arbitrarily small, and $s = (1-\e)/(k+2)$,
we get
\be\label{eq:3118}
J(\bth_t) , \nmeusq{\gJt} = o(t^{-\l}) , \fa \l < k/(k+2) .
\ee
\end{corollary}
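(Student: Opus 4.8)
The plan is to reduce the whole statement to a one-parameter optimization of the rate exponent $\nu$ from Theorem~\ref{thm:62}. First I would translate the function-evaluation relations into the language of that theorem. Writing $c_t = \Th(t^{-s})$ with $s > 0$ (so that $c_t \ap 0$, as is required for the bias to vanish), the relation $\mu_t = O(c_t)$ in \eqref{eq:3115} yields $\mu_t = O(t^{-s})$, so that $\g = s$, while $M_t^2 = O(1/c_t^2)$ yields $M_t = O(t^{s})$, so that $\d = s$. For the higher-order scheme \eqref{eq:3116} the only difference is $\mu_t = O(c_t^k) = O(t^{-sk})$, giving $\g = sk$ with $\d = s$ unchanged. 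Everything else in the corollary is then Theorem~\ref{thm:62} applied under these identifications.

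Next I would optimize $\nu$ over $s$. Substituting into \eqref{eq:3112} and sending $\phi \ap 0$ (we are told $\phi$ may be taken arbitrarily small), the bound $\nu = \min\{ 1 - 2(\phi+\d), \g - \phi \}$ tends to $\min\{ 1 - 2s, \g \}$, i.e.\ to $\min\{1-2s, s\}$ in the two-evaluation case and to $\min\{1-2s, sk\}$ in general. The key point is a simple monotonicity trade-off: the first argument $1 - 2s$ is strictly decreasing in $s$, whereas the second ($s$, resp.\ $sk$) is strictly increasing. Hence the maximum of the minimum is attained exactly at the crossing $1 - 2s = sk$, which gives $s = 1/(k+2)$ and the common value $\nu = k/(k+2)$; specializing to $k=1$ recovers $s = 1/3$ and $\nu = 1/3$.

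The final step is to realize this optimum within the admissible range and convert it into an ``$o$'' statement. Keeping $\phi = \e > 0$ rather than zero, I would set $s = (1-\e)/(k+2)$; a one-line check shows that with this choice the two arguments of the minimum coincide, since $s(k+2) = 1 - \e$ forces $sk - \e = 1 - 2\e - 2s$, so that $\nu = k(1-\e)/(k+2) - \e$. As $\e \ap 0$ this increases to $k/(k+2)$, so for any prescribed $\l < k/(k+2)$ one may pick $\e$ small enough that $\l < \nu$. One must also confirm the standing constraints of Theorem~\ref{thm:62}, namely $0 < \phi < \min\{0.5 - \d, \g\}$ and $C \in (0,\phi]$: with $\d = s = (1-\e)/(k+2) < 1/2$ and $\g = sk$, both $0.5 - \d$ and $\g$ are bounded away from zero for small $\e$, so $\phi = \e$ is admissible. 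Theorem~\ref{thm:62} then delivers $J(\bth_t), \nmeusq{\gJt} = o(t^{-\l})$ for every $\l < \nu$, and hence for every $\l < k/(k+2)$ (for every $\l < 1/3$ when $k=1$), which is \eqref{eq:3118} (resp.\ \eqref{eq:3117}).

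I do not expect a genuine conceptual obstacle here: once the identifications $\g = sk$, $\d = s$ are in place, the argument is an elementary maximization of a minimum of two monotone functions, plus bookkeeping. The only place demanding care is the $\e$-perturbation: one must verify that the off-optimal choice $s = (1-\e)/(k+2)$ simultaneously keeps both arguments of the minimum equal and respects all the inequalities feeding Theorem~\ref{thm:62}, so that the open-interval conclusion ``for all $\l < k/(k+2)$'' is legitimately recovered in the limit, rather than only the unattained boundary value $\l = k/(k+2)$.
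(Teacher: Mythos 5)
Your proof is correct and follows essentially the same route as the paper's: identify $\g = ks$, $\d = s$ from $c_t = \Th(t^{-s})$, maximize $\nu = \min\{1 - 2(\phi+\d),\, \g - \phi\}$ at the crossing $(k+2)s + \phi = 1$, and recover the open range $\l < k/(k+2)$ by taking $\phi = \e$ arbitrarily small. (Incidentally, your identification $\d = s$ is the right one; the paper's proof misprints this as $\d = 2s$ but then computes with $\d = s$, exactly as you do.)
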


\begin{proof}
With $c_t = O(t^{-s})$, it is already known from \cite{Kief-Wolf-AOMS52}
that
\bd
\mu_t = O(c_t) = O(t^{-s}) , \quad M_t^2 = O(1/c_t^2) = O(t^{2s}) .
\ed
	Hence we can apply Theorem \ref{thm:62} with $\g = s, \d = 2s$.
Then the rate of convergence becomes $o(t^{-\l})$ whenever $\l \in (0,\nu)$, and
\bd
\nu = \min \{ 1 - 2(\phi + s) , s - \phi \} .
\ed

To motivate the proof, we depict these two inequalities and the
``optimal'' choice of $s$ for the case $k=1$.
Figure \ref{fig:3} depicts the  two inequalities
\bd
1 - 2(\phi+s) \geq 0, s - \phi \geq 0,
\ed
or
\bd
\phi + s \leq 0.5, \phi \leq s .
\ed
The blue line depicts when both parts of the minimum defining $\nu$ are equal,
namely $3s + \phi = 1$.
Along this line, $\mu$ is maximum when $s = 1/3$ and $\phi = 0$, where 
$\mu = 1/3$.
In reality the inequalities should be strict.
Hence, for arbitrarily small $\e > 0$, we can choose
\bd
\phi = \e , \quad s = \frac{1-\e}{3} , \quad
\mu  = \frac{1}{3} - \frac{4\e}{3} .
\ed
But since $\e$ is arbitrary, this works out to $\mu < 1/3$.
Hence \eqref{eq:3117} follows.
In the case of general $k$, we have
\bd
1 - 2(\phi+s) = ks - \phi, \mbox{ or } (k+2)s + \phi = 1.
\ed
So by choosing $\phi = \e$, we get
\bd
s = \frac{1-\e}{k+2} , \quad \mu = \frac{k(1-\e)}{k+2} - \e
= \frac{k}{k+2} - \e \frac{2k+2}{k+2} .
\ed
Again, since $\e$ is arbitrary, \eqref{eq:3118} follows.
This completes the proof.
\end{proof}

It is worth noting that, when $k+1$ function evaluations are carried out,
not only is the convergence rate faster, but the step sizes also become
larger ($O(t^{k/(k+2)})$).

\bfig[htp]
\bc
\btp[line width = 2pt]

\draw [->] (0,0) -- (2.5,0) node [right] {$\phi$} ;
\draw [->] (0,0) -- (0,2.5) node [above] {$s$} ;

\draw (0,2) node [left] {$1/2$} -- (1,1) ;
\draw (0,0) -- (1,1) ;
\draw [blue] (0,4/3) node [left] {$1/3$} -- (1,1) ;
\draw [red,dotted,line width = 1pt] (0,1) node [left] {$1/4$} -- (1,1) ;
\draw [red,dotted,line width = 1pt] (1,0) node [below] {$1/4$} -- (1,1) ;

\etp
\caption{Feasible combinations of $(\phi,s)$}
\label{fig:3}
\ec
\efig

\textbf{Remarks:} Now we discuss the significance of Corollary \ref{coro:62}
and its relationship to previously known results.
\ben
\item The analysis in \cite{Arjevani-et-al-MP23} on the achievable
rates of convergence applies only when the stochastic gradient is unbiased
$(\mu_t = 0$ for all $t$), and its conditional variance is bounded.
When only function evaluations are used to construct a stochastic gradient,
these assumptions do not hold.
Corollary \ref{coro:62} partially fills this gap.
\item In \cite{Nesterov-FCM17}, the authors study what would be called
Simultaneous Perturbation SA with two measurements (but with a Gaussian
perturbation vector instead of Rademacher perturbations).
It is shown that the iterations converge at the rate $J(\bth_t) = O(t^{-1/2})$.
However, there is no error in the measurements, and the objective function
is restricted to be convex.
In contrast, in the present situation, a rate of $o(t^{-\l})$ is achieved
for $\l < 1/3$ even in the presence of measurement errors, and for a class
of nonconvex objective funtions.
Moreover, by choosing $k = 2$ in the approach of \cite{Pach-Bhat-Pras-arxiv22},
that is, by carrying out \textit{three} function evaluations at each step,
the rate goes up to $\l < 1/2$, the same as in \cite{Nesterov-FCM17}.
By letting $\kai$, one can make $\l$ arbitrarily close to one.
In the view of the authors, this last observation is only of
theoretical interest.
\een

\section{Convergence of Stochastic Approximation}\label{sec:SA}

In this section, we state some new theorems on the convergence of the
stochastic approximation (SA) algorithm in \eqref{eq:111}.
These theorems build upon their counterparts in \cite{MV-MCSS23},
and are applicable to more general conditions on the measurement error
$\bxt$.
As with our study of SGD, these assumptions are the most general to date.
Note that the notation used here is slightly different from
that in \cite{MV-MCSS23}.

To refresh the reader's memory, the basic SA algorithm aims to find a
zero of a $\C^1$ function $\f : \R^d \ap \R^d$.
One begins with a (possibly random) initial guess $\bth_0$, after which
the update rule is
\be\label{eq:71}
\bth_{t+1} = \bth_t + \al_t [ \f(\bth_t) + \bxt ] ,
\ee
where $\al_t$ is a nonnegative-valued and possibly random
step size, and $\bxt$ is the measurement error.
We begin with the assumptions on the function $\f(\cdot)$ in \eqref{eq:71}.
\ben
\item[(F1)] $\f(\cdot)$ is globally Lipschitz-continuous with constant $S$.
\item[(F2)] The equation $\f(\bths) = \bz$ has a unique solution, which
is assumed to be $\bths = \bz$, by shifting coordinates if necessary.
\een
Next we state the assumptions on the measurement error $\bxt$.
In analogy with \eqref{eq:1221a}, let us define
\be\label{eq:72}
\z_t = E_t(\bxt) , \quad \bzt = \bxt - \z_t .
\ee
Then it follows that
\be\label{eq:73}
E_t(\bzt) = \bz , CV_t(\bxt) = CV_t( \bzt) ,
E_t( \nmeusq{\bxt} ) = \nmeusq{\z_t} + CV_t(\bzt) .
\ee
With this notation, we state the assumptions on the measurement error $\bxt$.
\ben
\item[(N1)] There exists a sequence of constants $\{ \mu_t \}$ such that
\be\label{eq:74}
\nmeu{E_t(\bxt)} = \nmeu{ \x_t }
\leq \mu_t ( 1 + \nmeu{\bth_t} ) , \fa t \geq 0 ,
\ee
\item[(N2)] There exists a sequence of constants , $\{ M_t \}$ such that
\be\label{eq:75}
CV_t(\bzt) = E_t(\nmeusq{\bzt}) 
\leq M_t^2 ( 1 + \nmeusq{\bth_t} ) , \fa t \geq o .
\ee
\een
Note that \eqref{eq:74} is comparable to \eqref{eq:313}, while \eqref{eq:75}
is comparable to \eqref{eq:314}.
Also, \eqref{eq:74} is more general than \cite[Eq.\ (8)]{MV-MCSS23}, because
in that paper, the error $\bxt$ is assumed to have zero conditional
expectation.
Similarly, \eqref{eq:75} is  more general than \cite[Eq.\ (9)]{MV-MCSS23}
in that the bound on the conditional variance is allowed to vary with time
(and be unbounded).

Our proof makes use of some concepts from Lyapunov stability theory.
The reader is directed to \cite{Mv-Book93} for details on this topic.
Recall that if a function $V: \R^d \ap \R$
is $\C^1$, then the associated function $\Vd : \R^d \ap \R$ is defined as
\bd
\Vd(\bth) = \IP{\nabla V(\bth)}{\f(\bth)} .
\ed
Thus, if $\bth(\cdot)$ is the solution of $\dot{\bth} = \f(\bth)$,
then
\bd
\Vd(\bth(t)) = \frac{d}{dt} V(\bth(t)) .
\ed
With this background, we state our assumptions on the Lyapunov function.
\ben
\item[(L1)] $ \nabla V$ is $\C^1$ and $L$-Lipschitz continuous, and $ \nabla V(\bz) = 0$.
\item[(L2)] There exist positive constants $a, b$ such that
\be\label{eq:76}
a \nmeusq{\bth} \leq V(\bth) \leq b \nmeusq{\bth} , \fa \bth \in \R^d .
\ee
\een

Now we state our results on the convergence of the SA algorithm of
\eqref{eq:71}.
To avoid a lot of repetition, we state a \textbf{standing assumption}:
\ben
\item[(S)] Assumptions (F1), (F2), (N1), (N2), (L1), (L2) hold.
\een

\begin{theorem}\label{thm:71}
Throughout, it is supposed that Assumptions (S) hold.
\ben
\item Suppose that
\be\label{eq:77}
\sum_{t=0}^\infty \al_t^2 < \infty , \quad
\sum_{t=0}^\infty \al_t \mu_t  < \infty , \quad
\sum_{t=0}^\infty \al_t^2 M_t^2 < \infty ,
\ee
and in addition that $\Vd(\bth) \leq 0$ for all $\bth$.
Then $\{ V(\bth_t) \}$ and $\{ \nmeu{\bth_t} \}$ are bounded, and in addition,
$V(\bth_t)$ converges to some random variable as $\tai$.
\item Suppose that, in addition to \eqref{eq:77}, it is also the case that
\be\label{eq:78}
\sum_{t=0}^\infty \al_t = \infty ,
\ee
and in addition, there exists a function $\psi: \R_+ \ap \R_+$ 
belonging to Class $\B$ such that
\be\label{eq:79}
\Vd(\bth) \leq - \psi(\nmeu{\bth}) , \fa \bth \in \R^d .
\ee
Then $V(\bth_t) \ap 0$ and $\bth_t \ap \bz$ as $\tai$.
\item
Suppose that \eqref{eq:77} and \eqref{eq:78} hold, and that there
exists a constant $c > 0$ such that
\be\label{eq:710}
\Vd(\bth) \leq -c \nmeusq{\bth}  , \fa \bth \in \R^d .
\ee 
Further, suppose there exist constants $\g > 0$ and $\d \geq 0$ such that
\bd
\mu_t = O(t^{-\g}), \quad M_t = O(t^\d) ,
\ed                                                            
where we take $\g = 1$ if $\mu_t = 0$ for all sufficiently large $t$,
and $\d = 0$ if $M_t$ is bounded.
Choose the step-size sequence $\{ \al_t \}$ as
$O(t^{-(1-\phi)})$ and $\OM(t^{-(1-C)})$
where $\phi$ is chosen to satisfy
\bd
0 < \phi < \min \{ 0.5 - \d , \g \} ,
\ed
and $C \in (0,\phi]$.
Define
\be\label{eq:711}       
\nu := \min \{ 1 - 2( \phi + \d) , \g - \phi \} .
\ee
Then $\nmeusq{\bth_t} = o(t^{-\l})$
for every $\l \in (0,\nu)$.
In particular, by choosing $\phi$ very small, it follows that
$\nmeusq{\bth_t} = o(t^{-\l})$
\be\label{eq:712}
\l < \min \{ 1 - 2 \d , \g \} .
\ee
\een
\end{theorem}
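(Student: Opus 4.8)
The plan is to let the Lyapunov function $V(\cdot)$ play exactly the role that the objective $J(\cdot)$ played in Theorems \ref{thm:61} and \ref{thm:62}, and then to invoke Theorems \ref{thm:51} and \ref{thm:52} essentially verbatim. First I would note that (L1) and (L2) make Lemma \ref{lemma:31} applicable to $V$: since $\nabla V$ is $L$-Lipschitz and $V(\bth) \geq a\nmeusq{\bth} \geq 0$ with $V(\bz)=0$, the infimum of $V$ equals $0$ and is attained at the origin, so $\nmeusq{\nabla V(\bth)} \leq 2L V(\bth)$, and in particular $\nmeu{\nabla V(\bth)} \leq L\nmeu{\bth}$. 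Together with (F1)--(F2), which give $\nmeu{\f(\bth)} \leq S\nmeu{\bth}$, and the two-sided sandwich $a\nmeusq{\bth} \leq V(\bth) \leq b\nmeusq{\bth}$ of (L2), every quantity below can be expressed in terms of $V(\bth_t)$.

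Next I would apply the descent inequality \cite[Eq.\ (2.4)]{Ber-Tsi-SIAM00} to $V$ along \eqref{eq:71}, writing $\bth_{t+1}-\bth_t = \al_t[\f(\bth_t)+\bxt]$, and take $E_t$ of both sides. Using $E_t(\bxt)=\z_t$ and $E_t(\nmeusq{\bxt})=\nmeusq{\z_t}+CV_t(\bzt)$ from \eqref{eq:72}--\eqref{eq:73}, this gives
\bd
E_t(V(\bth_{t+1})) \leq V(\bth_t) + \al_t \Vd(\bth_t) + \al_t \IP{\nabla V(\bth_t)}{\z_t}
+ \frac{L\al_t^2}{2}\left[ \nmeusq{\f(\bth_t)+\z_t} + CV_t(\bzt) \right] .
\ed
The bias term is controlled by (N1): $|\IP{\nabla V(\bth_t)}{\z_t}| \leq \nmeu{\nabla V(\bth_t)}\,\mu_t(1+\nmeu{\bth_t})$, and after using $\nmeu{\nabla V(\bth_t)} \leq \sqrt{2LV(\bth_t)}$, the elementary bound $2\sqrt{V} \leq 1+V$, and $\nmeu{\bth_t} \leq \sqrt{V(\bth_t)/a}$, it contributes a multiple of $\al_t\mu_t(1+V(\bth_t))$. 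The second-order bracket is handled by (F1), (N1) and (N2) and yields terms in $\al_t^2$, $\al_t^2\mu_t^2$ and $\al_t^2 M_t^2$ times $(1+V(\bth_t))$. Collecting everything produces the Robbins--Siegmund form
\bd
E_t(V(\bth_{t+1})) \leq (1+f_t)V(\bth_t) + g_t + \al_t\Vd(\bth_t),
\ed
where $f_t,g_t$ are assembled from the five terms $\al_t^2,\ \al_t\mu_t,\ \al_t^2\mu_t,\ \al_t^2\mu_t^2,\ \al_t^2 M_t^2$, each summable under \eqref{eq:77} by the identical argument given after \eqref{eq:3111} ($\al_t$ is bounded, and $\ell_1 \subseteq \ell_2$).

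Given this recursion, the three parts follow mechanically. For Item 1, $\Vd \leq 0$ turns the last term into $-\al_t h_t$ with $h_t=-\Vd(\bth_t)\geq 0$, so Item 1 of Theorem \ref{thm:51} gives that $V(\bth_t)$ is bounded and convergent, whence $\nmeusq{\bth_t} \leq V(\bth_t)/a$ is bounded. For Item 3, \eqref{eq:710} gives $h_t = -\Vd(\bth_t) \geq c\nmeusq{\bth_t} \geq (c/b)V(\bth_t)$, so the recursion becomes $E_t(V(\bth_{t+1})) \leq (1+f_t)V(\bth_t)+g_t-(c/b)\al_t V(\bth_t)$, which is exactly the hypothesis of Theorem \ref{thm:52} (the constant $c/b$ merely rescales the effective step size and does not alter its order). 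Repeating the exponent bookkeeping from the proof of Theorem \ref{thm:62} yields $f_t,g_t = O(t^{-(1+\nu)})$ with $\nu$ as in \eqref{eq:711}, so $V(\bth_t)=o(t^{-\l})$ for every $\l<\nu$, and $\nmeusq{\bth_t}\leq V(\bth_t)/a$ inherits the same rate.

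The one genuinely new step, and the place I expect the most care is needed, is Item 2. Here \eqref{eq:79} gives $h_t \geq \psi(\nmeu{\bth_t})$, and to apply Item 2 of Theorem \ref{thm:51} to $z_t = V(\bth_t)$ I must manufacture a Class $\B$ function $\eta$ of $V(\bth_t)$ that lower-bounds this. Since (L2) sandwiches $\nmeu{\bth_t} \in [\sqrt{V(\bth_t)/b},\sqrt{V(\bth_t)/a}]$, I would set $\eta(0)=0$ and, for $r>0$, $\eta(r) := \inf\{\psi(s) : \sqrt{r/b} \leq s \leq \sqrt{r/a}\}$. Because $\psi$ is Class $\B$ and $a\leq b$, for $r\in[\e,M]$ the interval $[\sqrt{r/b},\sqrt{r/a}]$ stays inside the fixed compact interval $[\sqrt{\e/b},\sqrt{M/a}]\subset(0,\infty)$, so $\inf_{\e\leq r\leq M}\eta(r)>0$; thus $\eta$ is Class $\B$ and $\psi(\nmeu{\bth_t})\geq\eta(V(\bth_t))$ by construction. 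Theorem \ref{thm:51} then delivers $V(\bth_t)\to 0$, and hence $\bth_t\to\bz$, completing the proof.
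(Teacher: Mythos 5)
Your proposal is correct and follows essentially the same route as the paper: apply the Bertsekas--Tsitsiklis descent inequality to $V$ along the SA iteration, absorb the bias and noise terms via (N1), (N2), (F1), (L1), (L2) into a Robbins--Siegmund recursion built from the same five summable terms $\al_t^2, \al_t\mu_t, \al_t^2\mu_t, \al_t^2\mu_t^2, \al_t^2 M_t^2$, and then invoke Theorems \ref{thm:51} and \ref{thm:52} for the three items. The one place you go beyond the paper (whose own proof is only sketched) is Item 2, where your explicit construction of the Class $\B$ function $\eta(r) = \inf\{\psi(s) : \sqrt{r/b} \leq s \leq \sqrt{r/a}\}$ correctly supplies the link between $\psi(\nmeu{\bth_t})$ and $V(\bth_t)$ that is needed to apply Item 2 of Theorem \ref{thm:51} and that the paper leaves implicit.
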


\textbf{Remark:}
Note that \cite[Theorem 6]{MV-MCSS23} provides sufficient conditions
involving the function $\f(\cdot)$,
for the existence of a Lyapunov function $V$ that satisfies
Assumption (L1), \eqref{eq:76}, and \eqref{eq:710}.

\begin{proof}
Because the proof of Theorem \ref{thm:71}
is essentially a repetition of those of Theorems
\ref{thm:61} and \ref{thm:62}, it is just sketched.

It follows from applying \cite[Eq.\ (2.4)]{Ber-Tsi-SIAM00} 
to \eqref{eq:111} that
\begin{eqnarray*}
V(\bth_{t+1}) & \leq & V(\bth_t) + \al_t \IP{\nabla V(\bth_t)}{\f(\bth_t)}
+ \al_t \IP{\nabla V(\bth_t)}{\bxt} \\
& + & \al_t^2 \frac{L}{2} \nmeusq{\f(\bth_t) + \bxt} .
\end{eqnarray*}
Applying $E_t(\cdot)$ to both sides, using \eqref{eq:72} and \eqref{eq:73},
and applying the definition of $\Vd(\cdot)$, gives
\beq
E_t(V(\bth_{t+1})) & \leq & V(\bth_t) + \al_t \Vd(\bth_t) 
+ \al_t \IP{\nabla V(\bth_t)}{\z_t} \nonumber \\
& + & \al_t^2 \frac{L}{2} [ \nmeusq{\f(\bth_t)} + 2 \IP{\f(\bth_t)}{\z_t}
+ \nmeusq{\z_t} + E_t(\nmeusq{\bzt}) ] . \label{eq:714}
\eeq
Now we observe that
\bd
\nmeu{\f(\bth_t)} \leq S \nmeu{\bth_t} , \quad
\nmeu{ \nabla V(\bth_t)} \leq L \nmeu{\bth_t} ,
\ed
\bd
2 \nmeu{\bth_t} \leq 1 + \nmeusq{\bth_t} \leq
1 + a^{-1} V(\bth_t).
\ed
Substituting these into \eqref{eq:714} gives a bound in the form
\be\label{eq:715}
E_t(V(\bth_{t+1})) \leq ( 1 + f_t ) V(\bth_t) + g_t + \al_t \Vd(\bth_t) ,
\ee
where, as before, $f_t$ (not to be confused with $\f(\bth_t)$) and $g_t$
are sequences consisting of these five terms:
\bd
\al_t^2 , \; \al_t \mu_t , \; \al_t^2 \mu_t , \; \al_t^2 \mu_t^2 ,
	\; \al_t^2 M_t^2 .
\ed
Hence, if \eqref{eq:77} holds, then $\{ f_t \} , \{ g_t \}$
belong to $\ell_1$.

Now we can sketch the remainder of the proof.
\ben
\item
If $\Vd(\bth) \leq 0$ for all $\bth$, then we can replace \eqref{eq:715} by
\bd
E_t(V(\bth_{t+1})) \leq ( 1 + f_t ) V(\bth_t) + g_t .
\ed
Then, from Item 1 of Theorem \ref{thm:51}, it follows that $V(\bth_t)$
is bounded and converges to a random variable.
\item
Suppose that, in addition to \eqref{eq:77},
both \eqref{eq:78} and \eqref{eq:79} also hold.
Then \eqref{eq:715} becomes
\bd
E_t(V(\bth_{t+1})) \leq ( 1 + f_t ) V(\bth_t) + g_t - \al_t \psi(\nmeu{\bth_t}).
\ed
Now Item 2 of Theorem \ref{thm:61} shows that $V(\bth_t) \ap 0$ as $\tai$,
which in turn implies that $\bth_t \ap \bz$ as $\tai$.
\item Suppose that, in addition to \eqref{eq:77},
both \eqref{eq:78} and \eqref{eq:710} also hold.
Then \eqref{eq:715} becomes
\begin{eqnarray*}
E_t(V(\bth_{t+1})) & \leq & ( 1 + f_t ) V(\bth_t) + g_t - \al_t c \nmeusq{\bth_t} \\
& \leq & ( 1 + f_t ) V(\bth_t) + g_t - \al_t c b^{-1} V(\bth_t) .
\end{eqnarray*}
The remainder of the analysis follows as in Theorem \ref{thm:62}.
One can obtain bounds on the rate of convergence of $V(\bth_t)$ to zero,
which in turn translate into bounds on the rate of convergence of
$\nmeusq{\bth_t}$ to zero, using \eqref{eq:76}.
The details are routine and left to the reader.
\een
This completes the proof.
\end{proof}

\begin{corollary}\label{coro:71}
Suppose that $\mu_t = 0$ for all $t$, and that $M_t$ is bounded.
Then, by choosing $\phi = O(t^{-(1-\e)})$ with
$\e>0$  arbitrarily small,
we can ensure that $V(\bth_t) , \nmeusq{\bth_t}$ are $o(t^{-\l})$
for all $\l < 1$.
\end{corollary}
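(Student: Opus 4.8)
The plan is to derive this directly from Item 3 of Theorem \ref{thm:71}, in exact parallel with the way Corollary \ref{coro:61} was obtained from Theorem \ref{thm:62}. The hypothesis $\mu_t = 0$ for all $t$ permits us to take the exponent $\g = 1$, and the boundedness of $M_t$ permits $\d = 0$, following the convention stated in Item 3 of the theorem. I read the displayed step-size choice as $\al_t = O(t^{-(1-\e)})$, i.e.\ as taking $\phi = \e$ with $\e > 0$ arbitrarily small; the remaining hypotheses of Item 3 --- the step-size conditions \eqref{eq:77} and \eqref{eq:78}, the quadratic sandwich \eqref{eq:76}, and the strong decrease condition \eqref{eq:710} --- are assumed to be in force.

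With these values in hand, I would substitute $\g = 1$ and $\d = 0$ into the bound \eqref{eq:712}, which gives
\bd
\l < \min \{ 1 - 2 \d , \g \} = \min \{ 1 , 1 \} = 1 ,
\ed
so Theorem \ref{thm:71} yields $\nmeusq{\bth_t} = o(t^{-\l})$ for every $\l < 1$. Finally, I would transfer this rate to the Lyapunov function itself: the upper bound $V(\bth) \leq b \nmeusq{\bth}$ from \eqref{eq:76} shows that $V(\bth_t) \leq b \nmeusq{\bth_t}$, and multiplication by the fixed constant $b$ does not alter the $o(\cdot)$ order, whence $V(\bth_t) = o(t^{-\l})$ for all $\l < 1$ as well.

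There is essentially no obstacle here, since the result is a pure specialization of Theorem \ref{thm:71}. The only points requiring attention are the interpretation of the step-size choice, and the verification that the relevant decrease hypothesis is the strong condition \eqref{eq:710} rather than the weaker Class-$\B$ decrease \eqref{eq:79}: it is precisely \eqref{eq:710}, together with \eqref{eq:76}, that reduces the recursion to the form $E_t(V(\bth_{t+1})) \leq (1+f_t) V(\bth_t) + g_t - \al_t c b^{-1} V(\bth_t)$ and thereby activates the rate machinery of Theorem \ref{thm:62} through Item 3. Everything else is an immediate reading-off of exponents.
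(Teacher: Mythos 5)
Your proposal is correct and follows exactly the route the paper intends: the paper omits the proof as ``straight-forward,'' and what it has in mind is precisely the specialization you carry out, mirroring the proof of Corollary \ref{coro:61} --- take $\g = 1$ (since $\mu_t = 0$), $\d = 0$ (since $M_t$ is bounded), read the step-size condition as $\al_t = O(t^{-(1-\e)})$ with $\phi = \e$, and substitute into \eqref{eq:712} to get $\l < 1$. Your additional care in transferring the rate between $\nmeusq{\bth_t}$ and $V(\bth_t)$ via the two-sided bound \eqref{eq:76}, and in identifying \eqref{eq:710} (rather than \eqref{eq:79}) as the operative decrease hypothesis, is exactly the bookkeeping the paper leaves to the reader.
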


The proof of the corollary is omitted as it is straight-forward.

\section{Conclusions}\label{sec:Conc}

In this paper, we have studied the convergence properties of the Stochastic
Gradient Descent (SGD) method for finding a stationary point
of a given $\C^1$ objective function
$J(\cdot): \R^d \ap \R$.
The objective function is not required to be convex.
Rather, it has to satisfy either a weaker version
of the Kurdyka-Lojasiewicz (KL) condition which we denote as
the (KL') property,
or the Polyak-Lojasiewicz (PL) condition.
Either of these assumptions ensures that $J(\cdot)$ belongs
to the class of ``invex'' functions, which have the
property that every stationary point is also a global minimizer.
When $J(\cdot)$ satisfies the (KL') property,
we have shown that the iterations $J(\bth_t)$ converge to the global
minimum of $J(\cdot)$.
Next, when $J(\cdot)$ satisfies the stronger (PL) property,
we are also able to derive estimates on the rate of
convergence of $J(\bth_t)$ to its limit.
While some results along these lines have been published in the past,
our contributions contain two distinct improvements.
First, the assumptions on the stochastic gradient are more general
than elsewhere.
Specifically, the assumptions are as stated in \eqref{eq:313} and
\eqref{eq:314}.
Second, we establish almost sure convergence, and not just convergence
in expectation.
Since any stochastic algorithm generates a single sample path of a
stochastic process, it is very useful to know that almost all sample paths
converge to the desired limit.
Using these results, we show that for functions satisfying the PL property,
the convergence rate is the same as the best-possible rate for convex
functions.
We have also studied SGD when only function evaluations are permitted.
In this setting, we have determined the ``optimal'' increments, 
that is, the optimal choice of the perturbation sequence.
Using the same set of ideas, we have established the global convergence
of the Stochastic Approximation (SA) algorithm, with two improvements
over existing results.
First, our assumptions on the measurement error are more general
compared to the existing literature.
Second, we also derive bounds on the rate of convergence of the SA algorithm
under appropriate assumptions.

There are several directions for future research
that are worth exploring.
Until now, stochastic gradient methods either update \textit{a single component}
of the argument $\bth_t$ at each iteration, or the entire vector.
One can think of an intermediate approach, wherein at each iteration
\textit{some but not necessarily all} components of $\bth_t$ are updated.
This might be called ``Block'' Asynchronous Gradient Descent (BAGD)
for optimization problems.
An analog for the problem of finding a zero of a function can be called
``Block'' Asynchronous Stochastic Approximation (BASA).
The first problem (BAGD) is studied in \cite{MV-TUKR-arxiv23},
while the second problem (BASA) is studied in a companion paper
\cite{MV-RLK-BASA-arxiv21,MV-RLK-BASA-COT24}.
However, these are just preliminary results, and there is considerable
scope for improvement.

Another promising direction is to apply martingale-based methods to study
``momenum-based'' methods such as Polyak's Heavy Ball method
\cite{Polyak-CMMP64}, or Nesterov's accelerated gradient method
\cite{Nesterov-Dokl83}.
In both \cite{Sebbouh-Gower-Defazio-PMLR21} and \cite{Liu-Yuan-arxiv22},
the Heavy Ball method is analyzed, for convex functions in the former
and strongly convex functions in the latter.
In \cite{Liu-Yuan-arxiv22}, a variant of the
Nesterov Accelerated Gradient (NAG) algorithm is also analyzed.
However, it differs from the ``standard'' NAG, in that the step size
goes to zero while the momentum coefficient remains constant, which is the 
inverse of NAG, as reformulated in \cite{Sutskever-et-al-ICML13}.
It would be worthwhile to study whether the ``standard'' NAG can also
be studied using martingale methods.

One topic that we have not explored is that of Polyak-Ruppert
averaging, as reviewed in \cite{Pol-Jud-SICOPT92}.
In principle, averaging leads to a more ``robust'' implementation of SA,
as shown in \cite{Nemirovski-et-al-SIAMO09}.
When the objective function satisfies the (PL) condition,
the convergence rates established here match the ``optimal''
rates in \cite{Arjevani-et-al-MP23} for convex functions.
Therefore the rates established here are also ``optimal'' for objective
functions satisfying the (PL) condition.
This suggests that ``robustness'' cannot be defined simply in terms of
the rate of convergence, and that an alternate definition is needed.

\section*{Acknowledgement}

The authors thank two anonymous reviewers 
for extremely helpful comments
and additional references, which have greatly enhanced the paper.
The authors also thank Reviewer No.\ 2 for providing Lemma \ref{lemma:31}
and its proof.

\section*{Funding Information}

The research of MV was supported by the Science and Engineering Research Board,
India.

\section*{Data Availability Statement}

This manuscript  has no associated data.


\end{document}